\newtheorem{theorem}{Theorem}
\newtheorem{definition}{Definition}
\newtheorem{lemma}[theorem]{Lemma}
\newtheorem{assumption}{Assumption}
\newcommand{\E}{\mathbb{E}}
\newcommand{\BR}{\mathrm{BR}}
\newcommand{\BReps}{\mathrm{BR}_\epsilon}
\newcommand{\badv}{b_\mathrm{adv}}
\newcommand{\badvj}{b_{\mathrm{adv},j}}
\newcommand{\bprior}{b_\textrm{prior}}
\newcommand{\Vbr}{V_\mathrm{br}}
\newcommand{\Vmodel}{\hat{V}_{\pi}}
\newcommand{\Vstark}{V^*_{b_k}}
\newcommand{\Qha}{Q_{ha}}
\DeclareMathOperator*{\argmax}{arg\,max}
\begin{document}
%
\title{Robust and Adaptive Planning under Model Uncertainty}
\author{
Apoorva Sharma$^1$, James Harrison$^2$, Matthew Tsao$^3$, Marco Pavone$^1$\\
Department of $^1$Aeronautics and Astronautics, $^2$Mechanical Engineering, $^3$Electrical Engineering\\
\texttt{\{apoorva,jharrison,mwtsao,pavone\}@stanford.edu}\\
Stanford University, Stanford, CA 94305
}
\maketitle

\newcommand\jhtodo[1]{\textcolor{red}{[JH: #1]}} 
\newcommand{\jhmargin}[2]{{\color{red}#1}\marginpar{\color{red}\raggedright\footnotesize [JH]:#2}}

\newcommand\astodo[1]{\textcolor{blue}{[AS: #1]}} 
\newcommand{\asmargin}[2]{{\color{blue}#1}\marginpar{\color{blue}\raggedright\footnotesize [AS]:#2}}

\newcommand\mptodo[1]{\textcolor{green}{[MP: #1]}} 
\newcommand{\mpmargin}[2]{{\color{green}#1}\marginpar{\color{green}\raggedright\footnotesize [MP]:#2}}




\newcommand{\algName}{RAMCP}
\newcommand{\algNameLong}{Robust Adaptive Monte Carlo Planning}

\newcommand{\algNameFull}{\algName{}-F}
\newcommand{\algNameInc}{\algName{}-I}

\begin{abstract}
  Planning under model uncertainty is a fundamental problem across many applications of decision making and learning.
  In this paper, we propose the \algNameLong{} (\algName{}) algorithm, which allows computation of risk-sensitive Bayes-adaptive policies that optimally trade off exploration, exploitation, and robustness.
  \algName{} formulates the risk-sensitive planning problem as a two-player zero-sum game, in which an adversary perturbs the agent's belief over the models.
  We introduce two versions of the RAMCP algorithm. The first, \algNameFull{}, converges to an optimal risk-sensitive policy without having to rebuild the search tree as the underlying belief over models is perturbed. The second version, \algNameInc{}, improves computational efficiency at the cost of losing theoretical guarantees, but is shown to yield empirical results comparable to \algNameFull{}.
   \algName{} is demonstrated on an $n$-pull multi-armed bandit problem, as well as a patient treatment scenario.
\end{abstract}

\section{Introduction}
In many sequential decision making domains, from personalized medicine to human-robot interaction, the underlying dynamics are well understood save for some latent parameters, which might vary between episodes of interaction. For example, we might have models for the evolution of a disease of a patient under various treatments, but they may depend on unobserved, patient-specific physiological parameters. Faced with a new patient, the agent must learn over the course of a \textit{single} episode of interaction, as it simultaneously tries to identify the underlying parameters while maximizing its objective of improving the patient's health.

Such challenges are commonplace, yet not well addressed by standard episodic reinforcement learning, which assumes no prior knowledge and learns over the course of repeated interaction on the same system. These problems are better addressed by a Bayesian approach to reinforcement learning, in which the agent can leverage prior knowledge in the form of a \textit{belief distribution} over a family of likely models, which can be updated via Bayes' rule as the agent interacts with the system \cite{ghavamzadeh2015bayesian}. Incorporating this prior knowledge enables effective learning within a single episode of interaction, and also allows the the agent to consider how to balance identification of the latent parameters with maximizing the objective.

While leveraging a prior distribution over models is powerful, coming up with accurate priors remains a challenge. In the contexts of human interaction scenarios like patient treatment, these distributions might be obtained experimentally from past interactions, or may be chosen heuristically by a domain expert. Therefore, this prior over models is likely to be inaccurate, and thus it is paramount that the agent plan in a manner that is robust to incorrect priors, especially in safety-critical settings.

\paragraph{Contributions}
The contributions of this work are three-fold. First, we present (to our knowledge) the first mathematical framework to incorporate robustness to priors in the context of Bayesian RL. Second, we present \algNameLong{} (\algName), an sampling-based tree search algorithm for online planning in this framework for discrete MDPs and discrete priors over models. The approach fundamentally consists of an adversarial weighting step on top of standard risk-neutral tree-search approaches to Bayesian RL such as BAMCP \cite{guez2013scalable}. In particular, we introduce two version of the \algName{} algorithm, which we refer to as \algNameFull{} and \algNameInc{}. For the first, we prove that this adversarial optimization does not oscillate, and indeed converges to the optimal solution. \algNameInc{}, on the other hand, sacrifices convergence guarantees for empirical performance. Finally, we demonstrate the algorithms through numerical experiments including a patient treatment scenario, and compare the performance of the two versions of the \algName{} algorithm.

\section{Background}
\label{sec:background}
This work considers adding robustness to model-based Bayesian reinforcement learning. In this setting, we wish to control an agent in a system defined by the Markov Decision Process (MDP) $\mathcal{M} = (\mathcal{S},\mathcal{A},T,R,H)$, where $\mathcal{S}$ is the state space, $\mathcal{A}$ is the action space, $T(s'|s,a)$ is the transition function, $R(s,a,s')$ is the stage-wise reward function, and $H$ is the problem horizon. We assume that the exact transition dynamics $T$ depend on a parameter $\theta$, and denote this dependence as $T_\theta$. We assume that the agent knows this MDP, but is uncertain about the true setting of the parameter $\theta$, and instead maintains a belief distribution over this parameter. This structured representation of the agent's knowledge (and lack of knowledge) of the planning problem allows considering the objectives of exploration and robustness in addition to the standard MDP objective of maximizing reward.

\paragraph{The Explore/Exploit Dilemma}
As the agent acts in the true MDP, the observed state transitions will provide information about the true underlying system parameters. An agent might \textit{explore}, i.e. choose actions with the aim of reducing uncertainty over $\theta$, or \textit{exploit}, choosing actions to maximize cumulative reward given its current estimates of $\theta$.
The Bayesian setting allows optimally making this tradeoff.

Writing the observed transitions so far, or the \textit{history} in an environment at time $t$ as $h_t = (s_0, a_0, \ldots, s_t)$ and given a prior distribution over the model parameters $\bprior$, we can define optimal behavior in the Bayesian setting. Let $\mathcal{H}$ denote the set of possible histories for a given MDP. Then, we will write the set of stochastic history-dependent polices $\pi : \mathcal{H} \times \mathcal{A} \to [0,1]$ as $\Pi$. Let
\begin{align*}
    V(h_t,&\pi) =\\
    &\E_{\pi,b} \left[ \sum^{H-1}_{\tau=t} R(s_\tau,a_\tau,s_{\tau+1}) \mid ( s_0, a_0, \ldots, s_t ) = h_t \right]
\end{align*}
 denote the value function associated with policy $\pi$, for history $h$, and with model distribution $b$. A history-dependent policy $\pi^*$ is said to be Bayes-optimal with respect to the prior $b_\textrm{prior}$ if it has associated value function $V(\{s\},\pi^*) = \sup_{\pi \in \Pi} V(\{s\}, \pi)$ \cite{martin1967bayesian}.

The problem is a special case of a Partially Observable Markov Decision Processes (POMDP), where the hidden portion of the state (here, the parameters $\theta$) is fixed over the course of an episode. As with all POMDPs, this problem can be cast into a belief-state MDP by augmenting the state at time $t$ with the posterior belief (or, equivalently, the history up to that point $h_t$). This is the Bayes-Adpative Markov Decision Process (BAMDP) formulation, and the optimal policy in this MDP is the Bayes-optimal policy \cite{duff2002optimal}.
In general, optimizing these policies is computationally difficult. Information-state techniques (as in, e.g., Gittins indices for bandit problems \cite{gittins2011multi}) are typically intractable due to a continuously growing information-state space \cite{duff2002optimal}. Offline global value approximation approaches, typically based on offline POMDP solution methods, scale poorly to large state spaces \cite{ghavamzadeh2015bayesian}. Online approaches \cite{wang2005bayesian}, \cite{guez2013scalable}, \cite{chen2016pomdp} use tree search with heuristics to either simplify the problem or guide the search.

\paragraph{Robustness to Incorrect Priors}
While encoding model uncertainty through a prior over model parameters enables optimally balancing exploration and exploitation, it is likely that these priors may be inaccurate.
Previous work in policy optimization for BAMDPs has focused on optimizing performance in expectation, and thus does not offer any notion of robustness to misspecified priors \cite{guez2013scalable}. Robust MDPs are posed as MDPs with uncertainty sets over state transitions, and approaches to this problem aim to optimize the worst-case performance over all possible transition models \cite{nilim2005robust}. However, this minimax approach does not consider the belief associated with a transition model, and thus is typically over-conservative and can not optimally balance exploration and exploitation.

Tools from risk theory can be used to achieve tunable, distribution-dependent conservatism. Given a reward random variable $Z$, a \textit{risk metric} is a function $\rho(Z)$ that maps the uncertain reward to a real scalar, which encodes a preference model over uncertain outcomes where higher values of $\rho(Z)$ are preferred. A key concept in risk theory is that of a \textit{coherent} risk metric. These metrics satisfy axioms originally proposed in \cite{artzner1999coherent}, which ensure a notion of rationality in risk assessment. We refer the reader to \cite{majumdar2017should} for a more thorough discussion of why coherent risk metrics are a useful tool in decision making. While expectation and worst-case are two possible coherent risk metrics, the set of CRMs is a rich class of metrics including the Conditional Value at Risk (CVaR) metric popular in mathematical finance \cite{majumdar2017should}. The rationality of coherent risk metrics contrasts with standard approaches (especially in stochastic control \cite{glover1988state}) such as mean-variance and exponential risk metrics, for which there are simple examples of clearly absurd decision-making \cite{rabin2001anomalies}.

Applying risk metrics to the reward of a single MDP, yields an optimization problem with strong connections to the Robust MDP formulation, with the choice of risk metric in the risk-sensitive MDP corresponding to a particular choice of uncertainty set in the Robust MDP formulation \cite{chow2015risk} \cite{osogami2012robustness}. In contrast, in this work we apply tools from risk theory to the Bayesian setting in which we have a prior over MDPs to balance all three objectives of robustness, exploration, and exploitation in a coherent mathematical framework. To our knowledge, this work represents the first approach that considers all three of these objectives in the context of sequential decision making.

\section{Problem Statement}
\label{sec:problem}

We aim to compute a history dependent policy $\pi$ which is optimal according to a coherent risk metric over model uncertainty. To make this objective more concrete, let $\tau = (s_0, a_0, \dots, s_{H-1}, a_{H-1}, s_{H})$ be particular trajectory realization. Note that the probability of a given trajectory depends on both the choice of policy $\pi$ and the transition dynamics of the MDP, $T_\theta$.
The cumulative reward of a given trajectory $J(\tau)$ can be calculated by summing the stage-wise rewards
\begin{equation}
  J(\tau) = \sum_{t=0}^{H-1} R(s_t, a_t, s_{t+1}).
\end{equation}
Note that since the distribution over $\tau$ is governed by the stochasticity in each environment as well as the uncertainty over environments, the distribution of the total cost of a trajectory $J(\tau)$ is as well.

In this work, we focus our attention to risk-sensitivity with respect to the randomness from model uncertainty only. Concretely, we can write the objective as
\begin{align}
  \begin{split}
    \label{eqn:opt_metric}
    \Pi^* = \argmax_\pi  &~~ \rho \left( \E \left[ J(\tau) | \theta, \pi \right] \right),
  \end{split}
\end{align}
where the risk metric is with respect to the random variable induced by the distribution over models. Note that $\Pi^*$ denotes the set of optimal policies.

In this work, we consider a finite collection of $M$ possible parameter settings, $\Theta = \{\theta_i\}_{i=1}^{M}$, and write our prior belief over $\Theta$ as the vector $\bprior$. While limiting ourselves to discrete distributions over model parameters is somewhat restrictive, these simplifications are common in, for example, sequential Monte Carlo. Indeed, computing continuous posterior distributions exactly is often intractable, making this discrete approximation necessary \cite{guez2014bayes}.

The above objective differs from that typically used in the risk-sensitive reinforcement learning literature, which applies the risk metric $\rho$ directly to the total reward random variable $J(\tau)$ \cite{tamar2017sequential}. In contrast, we first marginalize out the effects of stochasticity via the expectation, and then apply $\rho$ to the multinomial random variable $\E \left[ J(\tau) | \theta = \theta_i \right]$ where each outcome corresponds to the expected value of the policy $\pi$ on model $\theta_i$. The robustness provided by risk-sensitivity protects against modeling errors in distribution. In the BAMDP context, the primary error in distribution is in the model belief: the true distribution over models has all its mass on one model, so effectively any belief will be incorrect. Furthermore, these beliefs will be updated online, and thus are susceptible to noise. Comparatively, we assume that for a particular model, the stochasticity in transition dynamics is well characterized. Thus, we argue that the optimization objective (\ref{eqn:opt_metric}) is well-aligned with our goal of enabling robustness to model uncertainty.

\section{Approach}

In this section we discuss the high-level approach taken in \algName{}. We begin by reformulating (\ref{eqn:opt_metric}) as a two-player, zero-sum game. This game is played between an agent computing optimal history-dependent policies with respect to a belief over models, and an adversary perturbing this belief. Armed with this problem reformulation, we describe Generalized Weakened Fictitious Play (GWFP) \cite{leslie2006generalised}, a framework for computing Nash equilibria of two-player zero-sum games (as well as a collection of other game settings). Finally, we outline application of GWFP to (\ref{eqn:opt_metric}).

\subsection{Reformulation as a Zero-Sum Game}

Our reformulation of the objective stems from a universal representation theorem which all coherent risk metrics (CRMs) satisfy.
\begin{theorem}[Representation Theorem for Coherent Risk Metrics \cite{artzner1999coherent}] Let $(\Omega, \mathcal{F}, \mathbb{P})$ be a probability space, where $\Omega$ is a finite set with cardinality $|\Omega|$, $\mathcal{F}$ is a $\sigma-$algebra over subsets (i.e., $\mathcal{F}=2^\Omega$), probabilities are assigned according to $\mathbb{P} = (p(1), \dots, p(|\Omega|))$, and $\mathcal{Z}$ is the space of reward random variables on $\Omega$. Denote by $\mathcal{C}$ the set of valid probability densities:
 \begin{equation}
   \mathcal{C} := \left\{ \zeta \in \mathbb{R}^{|\Omega|} \mid \sum_{i=1}^{|\Omega|} p(i) \zeta(i) = 1, \zeta \ge 0 \right\}.
 \end{equation}
Define $p_\zeta \in \mathbb{R}^{|\Omega|}$ as $p_{\zeta}(i) = p(i) \zeta(i), i = 1, \dots, |\Omega|$. A risk metric $\rho: \mathcal{Z} \to \mathbb{R}$ with respect to the space $(\Omega, \mathcal{F}, \mathbb{P})$ is a coherent risk metric if and only if there exists a compact convex set $\mathcal{B} \subset \mathcal{C}$ such that for any $Z \in \mathcal{Z}$:
\begin{equation}
  \label{eqn:crm_rep}
  \rho(Z) = \min_{\zeta \in \mathcal{B}} \mathbb{E}_{p_\zeta}[Z] = \min_{\zeta \in \mathcal{B}} \sum_{i=1}^{|\Omega|} p(i) \zeta(i) Z(i).
\end{equation}
\end{theorem}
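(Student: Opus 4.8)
The plan is to prove the biconditional by identifying the reward space $\mathcal{Z}$ with $\mathbb{R}^{|\Omega|}$ and leaning on finite-dimensional convex duality, recalling that in this ``higher is better'' convention coherence amounts to four axioms: monotonicity, positive homogeneity, translation invariance, and superadditivity (which together with positive homogeneity is equivalent to concavity). The ``if'' direction is routine and I would dispatch it by inspecting the formula $\rho(Z) = \min_{\zeta \in \mathcal{B}} \mathbb{E}_{p_\zeta}[Z]$, a pointwise minimum of the linear functionals $Z \mapsto \sum_i p(i)\zeta(i)Z(i)$. Monotonicity holds because each weight vector $p_\zeta = (p(i)\zeta(i))_i$ is nonnegative, so $Z_1 \le Z_2$ pointwise gives $\mathbb{E}_{p_\zeta}[Z_1] \le \mathbb{E}_{p_\zeta}[Z_2]$ for every $\zeta$, and taking the minimum preserves the inequality; positive homogeneity is immediate from linearity; translation invariance uses the normalization $\sum_i p(i)\zeta(i) = 1$ built into $\mathcal{C}$, giving $\mathbb{E}_{p_\zeta}[Z + c] = \mathbb{E}_{p_\zeta}[Z] + c$; and concavity (hence superadditivity) holds because a minimum of affine functions is concave.

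The substantive direction is the converse. Given a coherent $\rho$, I would construct the representing set explicitly as the collection of density ratios whose induced linear functional dominates $\rho$ everywhere,
\[
  \mathcal{B} = \Big\{\zeta \in \mathbb{R}^{|\Omega|} : \sum_i p(i)\zeta(i)Z(i) \ge \rho(Z)\ \text{for all}\ Z \in \mathcal{Z}\Big\},
\]
and then verify three things. First, $\mathcal{B} \subset \mathcal{C}$: testing the defining inequality against the indicator reward $Z = \mathbf{1}_{\{i\}}$ and using monotonicity (with $\rho(0)=0$ from positive homogeneity) forces $p(i)\zeta(i) \ge 0$, hence $\zeta(i) \ge 0$, while testing against both $Z = c$ and $Z = -c$ for constants $c$ and using translation invariance ($\rho(c) = c$) forces $\sum_i p(i)\zeta(i) = 1$. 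Second, $\mathcal{B}$ is compact and convex: it is an intersection over all $Z$ of closed half-spaces, hence closed and convex, and it is bounded because it lies inside $\mathcal{C}$, which the normalization constraint renders compact (each $\zeta(i) \le 1/p(i)$). These two checks are where the specific coherence axioms, rather than generic convexity, are indispensable.

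The crux is tightness: $\rho(Z) = \min_{\zeta \in \mathcal{B}} \mathbb{E}_{p_\zeta}[Z]$. The inequality $\rho(Z) \le \min_{\zeta \in \mathcal{B}} \mathbb{E}_{p_\zeta}[Z]$ is immediate from the definition of $\mathcal{B}$, so the whole difficulty lies in the reverse inequality and its attainment. Here convex duality enters: since $\rho$ is finite, concave, and positively homogeneous on $\mathbb{R}^{|\Omega|}$ it is automatically continuous, so by Fenchel--Moreau it equals its biconjugate, and positive homogeneity collapses its concave conjugate to the indicator of exactly the set $\mathcal{B}$, yielding $\rho(Z) = \inf_{\zeta \in \mathcal{B}} \mathbb{E}_{p_\zeta}[Z]$ with the infimum attained by compactness. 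Concretely, for a fixed $Z_0$ I would apply the supporting-hyperplane theorem to the hypograph of $\rho$ at $(Z_0,\rho(Z_0))$ to obtain a vector $q \in \mathbb{R}^{|\Omega|}$ with $\sum_i q(i)Z(i) \ge \rho(Z)$ for all $Z$ and equality at $Z_0$; setting $\zeta(i) = q(i)/p(i)$ produces an element of $\mathcal{B}$ attaining the minimum. I expect the main obstacle to be precisely this last point: certifying that the abstract supporting functional is a legitimate element of $\mathcal{C}$ (nonnegative and normalized), which is exactly what the monotonicity and translation-invariance axioms guarantee.
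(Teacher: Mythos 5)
The paper never proves this statement: it is imported as classical background with a citation to Artzner et al., and the appendix proves only the \algNameFull{} convergence theorem and the weighted-tree-update consistency result. So your argument cannot be compared to a ``paper proof''---it stands alone, and on its own terms it is essentially correct: it is the standard finite-dimensional duality proof of the representation theorem, correctly adapted to the paper's reward-maximization sign convention (min over the envelope, concavity/superadditivity in place of convexity/subadditivity). The ``if'' direction checks are all sound; in the converse, your construction of $\mathcal{B}$ as the set of all dominating density ratios, the verification $\mathcal{B}\subset\mathcal{C}$ by testing indicators (nonnegativity, via monotonicity and $\rho(0)=0$) and constants of both signs (normalization, via translation invariance), and the tightness step via a supporting hyperplane to the hypograph are exactly the classical route; note that equality $\langle q, Z_0\rangle=\rho(Z_0)$ at the support point follows by evaluating the supporting inequality at $Z=0$ and $Z=2Z_0$, which you should spell out. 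Two points need to be made explicit to close the argument. First, you implicitly require $p(i)>0$ for every $i$: otherwise $\zeta(i)=q(i)/p(i)$ is undefined, the bound $\zeta(i)\le 1/p(i)$ fails, and $\mathcal{C}$ itself is unbounded, so your compactness claim collapses; this positivity assumption is harmless but must be stated. Second, the supporting hyperplane to the hypograph must be non-vertical for your construction of $q$ to make sense; this is automatic here because a finite concave function on all of $\mathbb{R}^{|\Omega|}$ is continuous with nonempty superdifferential at every point, but the justification belongs in the proof. You should also remark that $\mathcal{B}$ is nonempty (it contains the supporting functional produced at any fixed $Z_0$), since otherwise the minimum in the representation is vacuous.
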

This theorem offers an interpretation of CRMs as a worst-case expectation over a set of densities $\mathcal{B}$, often referred to as the \textit{risk envelope}.
The particular risk envelope depends on the risk metric chosen, as well as the degree of risk-aversity (which is captured by a parameter for most coherent risk metrics).
In this work we focus on \textit{polytopic risk metrics}, for which the envelope $\mathcal{B}$ is a polytope. For this class of metrics, the constraints on the maximization in Equation \ref{eqn:crm_rep} become linear in the optimization variable $\zeta$, and thus solving for the value of the risk metric becomes a tractable linear programming problem. Note that solving this linear program is at the core of \algName{}. This computational step is why \algName{} is restricted to discrete distributions over models, and can not directly be extended to continuous beliefs. Polytopic risk metrics constitute a broad class of risk metrics, encompassing risk neutrality, mean absolute semi-deviation, spectral risk measures, as well as the CVaR metric often used in financial applications, with the choice of metric determining the form of the polytope \cite{eichhorn2005polyhedral}.

Through the representation theorem for coherent risk metrics (Equation \ref{eqn:crm_rep}), we can understand Equation \ref{eqn:opt_metric} as applying an adversarial reweighting $\zeta$ to the distribution over models $b_\mathrm{prior}$. Let $b_{\mathrm{adv}}(i) = b_\mathrm{prior}(i)\zeta(i), ~i=1,\dots,M$ represent this reweighted distribution. Thus, (\ref{eqn:opt_metric}) may be written
\begin{align}
  \begin{split}
    \label{eqn:opt_objective}
    \Pi^* = \argmax_\pi &~ \min_{\zeta \in \mathcal{B}} \mathbb{E}_{\theta \sim b_{\mathrm{adv}}} \left[ \mathbb{E} \left[ J(\tau) \mid \theta, \pi \right] \right],
  \end{split}
\end{align}
where again $\Pi^*$ denotes the set of optimal policies. We are interested simply in finding a single policy within this set, as opposed to the full set. Note that this takes the form of a two player zero-sum game between the agent (the maximizer) and an adversary (the minimizer). One play of this game corresponds to the following three step sequence:
\begin{enumerate}
  \item The adversary acts according to its strategy, choosing $b_\mathrm{adv}$ from the risk envelope.
  \item Chance chooses $\theta \sim b_\mathrm{adv}$.
  \item The agent acts according to its strategy, or policy, $\pi(h)$ in the MDP with dynamics $T_\theta$.
\end{enumerate}
The action of the adversary is to choose a perturbation to the belief distribution from the polytope of valid disturbances that minimizes the expected performance of the agent. The agent seeks to compute an optimal policy under this perturbed belief. The solution to Equation \ref{eqn:opt_objective} is therefore the optimal Nash equilibrium of the two player game, which we denote as $(b_\mathrm{adv}^*, \pi^*)$.

\subsection{Computing Nash Equilibria}

Having formulated (\ref{eqn:opt_metric}) as a two-player zero-sum game, we leverage tools developed in algorithmic game theory to efficiently compute Nash equilibria. For two-player, zero-sum games, a Nash equilibrium can be directly computed by solving a linear program of size proportional to the strategy space of each player. In the context of our problem statement, the agent's strategy space is the space of all history dependent policies, and thus solving for a Nash equilibrium directly is computationally intractable. To compute the Nash equilibria of the game defined by (\ref{eqn:opt_metric}), we apply iterative techniques which converge to equilibria over repeated simulations of the game.

Fictitious Play \cite{brown1949some} is a process in which players repeatedly play a game and update their strategies toward the best-response to the average strategy of their opponents. This process has been shown to asymptotically converge to a Nash equilibrium. In \cite{leslie2006generalised}, the authors introduced Generalized Weakened Fictitious Play (GWFP), which allows for computation of approximate best-responses but maintains convergence guarantees, and thus has worked well for large-scale extensive form games \cite{heinrich2015fictitious}. GWFP converges to Nash equilibria in several classes of games, including two-player zero-sum games such as (\ref{eqn:opt_objective}).
For the risk-sensitive BAMDP, the GWFP updates to the adversary strategy $b$ and agent strategy $\pi$ are:
\begin{align}
    b_{k+1} &= (1 - \alpha_{k+1}) b_k + \alpha_{k+1} \BReps ( \pi_k ), \label{eq:b_update} \\
    \pi_{k+1} &= (1 - \alpha_{k+1}) \pi_k + \alpha_{k+1} \BReps ( b_k ) 
\end{align}
where $\BReps(\sigma)$ represents an $\epsilon$-suboptimal best response\footnote{If an opposing player chooses strategy $\sigma$, then an $\epsilon$-suboptimal best response to $\sigma$ is a strategy such that the player obtains a payoff (or cumulative reward) within $\epsilon$ of that of an optimal response (which is itself referred to as a best response).} to strategy $\sigma$, and $\alpha_{k+1}$ is an update coefficient chosen such that $\sum_{k=1}^\infty \alpha_k = \infty$ and $\lim_{k \to \infty} \alpha_k = 0$.
 While $\BReps(\sigma)$ is typically used to refer to the set of $\epsilon$-best responses, we will use this to refer to a strategy within this set. In this work, we set $\alpha_k = 1/k$ and thus both strategies represent running averages of the best-responses, a property we leverage in our algorithm. Initial values of the belief and policy may be chosen arbitrarily.

The adversarial best response $\BReps(\pi_k)$ can be computed by solving the linear program
\begin{equation}
\label{eqn:LP}
\begin{split}
  &\min_{b,\zeta \in \mathcal{B}} \sum_{i=1}^{M} \hat{V}_{\pi_k}(i) b(i)\\ &\,\,\textrm{s.t.}\,\, b_\textrm{prior}(i) \zeta(i)  = b(i), \,\,\,i = 1, \ldots, M,
\end{split}
\end{equation}

where $\mathcal{B}$ is the polytopic risk envelope, and $\hat{V}_{\pi_k}(i)$ is an estimate of $V_{\pi_k}(i) := \E \left[ J(\tau) \mid \theta = \theta_i, \pi = \pi_k \right]$. The suboptimality of the solution of this LP is bounded by the error in $\hat{V}_{\pi_k}(i)$.

The agent's best response $\BReps(b_k)$ is a history dependent policy  $ \pi(h) = \argmax_a \hat{Q}_{b_k}(h,a)$, where $\hat{Q}_{b_k}(h,a)$ is an estimator of $Q^*_{b_k}(h,a)$, the value of taking action $a$ at history $h$, then acting optimally when $\theta$ is drawn from $b_k$ at the start of the episode. However, computation of this policy is non-trivial, and a na\"ive approach to value estimation would involve substantial repeated computation that would result in poor performance.

\begin{algorithm}[t]
\caption{\label{alg:ramcp} \algName{}}
\centering
\begin{algorithmic}[1]
\small
  \Function{Search}{$s_0$, $b_\mathrm{prior}$}
    \State $\Vmodel(i) \gets 0$ for all $i = 1,\dots,M$
    \State $k \gets 0$
    \State $\badv \gets b_\mathrm{prior}$
    \While{within computational budget}
        \State $k \gets k + 1$
        \State $w \gets M \cdot \BReps(\pi)$
        \For{$i=1$ \textbf{to} $M$}
            \State $w \gets M \cdot \badv(i)$
            \State $\Vbr \gets $ \Call{Simulate}{$s_0$, $\theta_i$, $w$}
            \label{alg:rollout_value_estimation}
            \State $\Vmodel(i) \gets \Vmodel(i) + \frac{1}{k} ( \Vbr - \Vmodel(i))$ \label{alg:model_value_update}
        \EndFor
        \State {\color{blue} \Call{ComputeQValues}{$s_0$} \Comment{for RAMCP-F only}}
        \State $\badv \gets $ solution to linear program (\ref{eqn:LP})

    \EndWhile
    \State \Return $\pi_\mathrm{avg} =$ \textsc{AvgAction}($h$) for all $h$
  \EndFunction
  \algstore{savept}
  \end{algorithmic}
\end{algorithm}

\section{\algName{} Outline}

Carrying out the GWFP process to solve (\ref{eqn:opt_objective}) requires estimates $\hat{V}_{\pi_k}(i)$ and $\hat{Q}_{b_k}(h,a)$ at every iteration $k$.
To compute $\hat{V}_{\pi_k}(i)$, we can average the total reward accrued on multiple rollouts of policy $\pi_k$ on model $\theta_i$, obtaining a Monte Carlo estimate of $\E \left[ J(\tau) \mid \theta_i, \pi_k \right]$.
Computing $\hat{Q}_{b_k}(h,a)$ is equivalent to approximately solving the BAMDP induced by distribution $b_k$. Many sampling-based methods exist to estimate the optimal Q function in a BAMDP.

Guez et al. \cite{guez2013scalable} showed that performing Monte-Carlo tree search where the dynamics parameter $\theta$ is drawn from $b_k$ at the root of the tree at each iteration can accurately estimate the optimal value function $Q^*_{b_k}(h,a)$. These techniques suggest a na\"ive approach to solving for the optimal policy: at each iteration of the GWFP process, one could compute $\BReps(b_k)$ by running a tree search algorithm on $b_k$, and compute $\BReps(\pi_k)$ by rolling out policy $\pi_k$ on each model $\theta_i$ to get $\hat{V}_{\pi_k}(i)$, and then solving the linear program (Equation \ref{eqn:LP}). This is clearly impractical, as each iteration requires solving a new BAMDP. Furthermore, in order for the GWFP process to converge, the suboptimality of the best responses must go to zero as $k\rightarrow\infty$. Thus, each iteration of this na\"ive implementation would require a growing number of samples, increasing the computational challenges with this approach. Critically, we are able to leverage the structure of the GWFP process to obtain an algorithm that converges to the same result, iterating between performing FP iterations and building the tree. This approach requires growing only one tree, which results in substantial efficiency improvement.

\begin{algorithm}[t]
\caption{\label{alg:ramcp-simulate} Simulate}
\begin{algorithmic}[1]
\small
    \algrestore{savept}
    \Function{Simulate}{$h$, $\theta$, $w$}
        \State $N(h) \gets N(h) + 1$
        \State $W(h) \gets W(h) + w$
                    \label{alg:Wcomputation}
        \State $\Vbr \gets 0$

        \If{$\textsc{Len}(h) >= H$ \textbf{or} $h$ is terminal}
            \State \Return $\Vbr$
        \EndIf

        \ForAll{$a\in\mathcal{A}$}
            \State $N(h,a) \gets N(h,a) + 1$
            \State $W(h,a) \gets W(h,a) + w$

            \State $s' \sim T_\theta(s, a)$
            \State $r \gets R(s,a,s')$
            \State $\Vbr' \gets$ \Call{Simulate}{$has'$,$\theta$,$w$}

            \State $\Qha \gets r + \Vbr'$
            \State {\color{red} $Q(h,a) \gets Q(h,a) + \frac{1}{N(h,a)} ( w \Qha - Q(h,a) )$}
                        \label{alg:q_function_update}

            \If{$a == \argmax_{a'} Q(h,a')$}
                \State $\Vbr \gets \Qha$
                \State $W_{br}(h,a) \gets W_{br}(h,a) + w$
                \State {\color{red} $V(h) \gets V(h) + \frac{1}{N(h)} ( w \Vbr - V(h) )$}
                                 \label{alg:value_function_update}
            \EndIf
        \EndFor
        \State \Return $\Vbr$
    \EndFunction
    \algstore{savept}
    \end{algorithmic}
\end{algorithm}

\subsection{Algorithm Overview}
In this section, we present the \algName{} algorithm, which combines simulation-based search with fictitious play iterations to optimize the risk-sensitive, Bayes-adaptive objective (\ref{eqn:opt_objective}). We present two versions of this procedure: \algNameFull{}, a slower, but provably asymptotically optimal algorithm, and the more efficient \algNameInc{}, for which we do not provide a proof of convergence, but observe good performance empirically. The algorithms share the same overall structure, which is detailed in Algorithm \ref{alg:ramcp}. Portions specific to \algNameFull{} are highlighted in blue, while those specific to \algNameInc{} are in red.


To compute a risk-sensitive plan for belief $b_\mathrm{prior}$ from state $s_0$, the agent calls the \textsc{Search} function. The function iterates between simulating rollouts on different transition models, sampled from an adversarial distribution, and using the improved value estimates from these simulations to improve the agent policy and the adversarial distribution.

Employing ideas from conditional Monte Carlo, the algorithm loops over each model $\theta_i$, and computes a weighting $w$ proportional to the current adversarial belief $\badv$. This weighting is applied to the statistics recorded in the tree, allowing the algorithm to estimate quantities as if the models were sampled from $\badv$ rather than looped over deterministically.

The algorithm maintains a tree where each node is state or action along a trajectory from $s_0$. We denote nodes by the history leading to the node. If the node is a state we refer to this history $h$, and those ending in an action, which we denote as $ha$. For every node, we store visitation counts $N(h), N(h,a)$, cumulative visitation weights $W(h), W(h,a)$, and value estimates $V(h), Q(h,a)$ for every node in the tree.

The call to $\textsc{Simulate}(h,\theta,w)$ (Algorithm \ref{alg:ramcp-simulate}) simulates all possible $H$-length action sequences starting at $h$ under the a given model $\theta$, and increments the visitation weights along the resulting trajectories by $w$. The function returns $\Vbr$, the reward sequence obtained when taking actions were consistent with the greedy policy with respect to the current $Q$ values stored in the tree. By keeping a running average of $\Vbr$ from $s_0$ for every model $\theta_i$ in $\Vmodel(i)$, $\Vmodel(i)$ estimates the performance of the agent's average strategy ($\pi_k$ in GWFP) on model $\theta_i$.

\begin{algorithm}
\small
\caption{\label{alg:ramcp-computeQ} ComputeQValues}
  \begin{algorithmic}[1]
    \algrestore{savept}
    \Function{ComputeQValues}{$h$}
        \If{$\textsc{Len}(h) >= H$ or $h$ is terminal}
            \State $V(h) = 0$
            \State \Return $0$
        \EndIf
        \ForAll{$a \in \mathcal{A}$}
            \State $w, r, V' \gets [\,], [\,], [\,]$
            \ForAll{$s' \in \textsc{Children}(ha)$}
                \State \textsc{Append}($w$, $W(has')$)
                \State \textsc{Append}($r$, $R(s,a,s')$)
                \State \textsc{Append}($V'$, \Call{ComputeQValues}{$has'$})
            \EndFor
            \State $Q(h,a) \gets \frac{1}{W(h,a)} \textsc{Sum}(w \odot (r + V') )$
        \EndFor
        \State $V(h) \gets \max_{a} Q(h,a)$
        \State \Return $V(h)$ \label{alg:value_function_est}
    \EndFunction

\end{algorithmic}
\end{algorithm}

In \algNameFull{}, the value estimates in the tree are updated by calling $\textsc{ComputeQValues}$ (Algorithm \ref{alg:ramcp-computeQ}). This function recurses through the entire simulated tree, and computes value estimates starting at the leaf nodes, via dynamic programming according to the empirical transition probabilities
\begin{align}
    Q(h,a) &= \sum_{s'} \hat{p} (s' | h, a) ( R(s,a,s') + V(has') ) \\
    V(h) &= \max_a Q(h,a)
\end{align}
Here, we only sum over visited states, and use $\hat{p} (s' | h, a) = W(has')/W(ha)$ as the empirical transition probability. In the appendix, it is shown that this probability converges to the transition probability corresponding to a BAMDP with $\theta \sim \bar{b}_\textrm{adv}$, where $\bar{b}_\textrm{adv}$
is the average adversary strategy over the iterations of the algorithm. Since this average strategy is exactly $b_k$ in the GWFP algorithm, this step corresponds to computing $\BReps(b_k)$.

Finally, the procedure computes $\badv = \BReps(\pi_k)$ by solving the LP (\ref{eqn:LP}) given the current estimates of agent performance in $\Vmodel$.

Critically, in contrast to the na\"ive approach, \algName{} avoids performing a separate simulation-based search under a new model distribution $b_k$ at every iteration of of the GWFP. Instead, it performs simulations according to $\badv = \BReps(\pi_k)$ combines the averaging of GWFP with the averaging used in the Monte Carlo estimation of the value estimates. Note that the proposed approach does not compute the running averages $\pi_k$ and $b_k$ directly. However, since the mixed strategy $\pi_k$ is what converges to the Nash equilibrium in GWFP, we keep counts $W_\textrm{br}(ha)$, corresponding to whenever action $a$ matches $\BReps(b_k)$. The mixed strategy corresponds to sampling actions at history $h$ with probability proportional to $W_\textrm{br}(ha)$. Procedure \textsc{AvgAction}($h$) samples actions in this fashion. Thus, \algNameFull{} implicitly carries out the GWFP process, and therefore converges to a Nash equilibrium. More formally, we have the following result:

\begin{theorem}[Convergence of \algNameFull{}]
\label{thm:conv}
Let $\pi_k$ denote the output of \algNameFull{} (Algorithm 1) after $k$ iterations of the outer loop. Then, $\lim_{k \to \infty} \pi_k \in \Pi^*$ in probability.
\end{theorem}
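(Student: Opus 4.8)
The plan is to show that the iterates produced by \algNameFull{} constitute a Generalized Weakened Fictitious Play (GWFP) process applied to the zero-sum game (\ref{eqn:opt_objective}), with best-response suboptimality vanishing as $k\to\infty$, and then to invoke the known convergence of GWFP to the set of Nash equilibria for two-player zero-sum games \cite{leslie2006generalised}. Since for a zero-sum game the minimax theorem identifies the set of Nash-equilibrium agent strategies with the maximin set $\Pi^*$, this yields $\pi_k \to \Pi^*$. The subtlety is that, unlike idealized GWFP, \algNameFull{} never evaluates best responses exactly: both $\Vmodel(i)$ and the tree $Q$-values are finite-sample Monte Carlo quantities accumulated in a single shared search tree. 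The argument therefore has two layers, a deterministic game-theoretic layer (GWFP with approximate best responses converges) and a probabilistic estimation layer (the approximation error tends to zero in probability).

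First I would make the correspondence between the algorithm's bookkeeping and the GWFP updates precise. On the adversary side, $\badv$ is updated each iteration by the LP (\ref{eqn:LP}) evaluated at the current $\Vmodel$; because $\alpha_k = 1/k$, the maintained quantity is exactly the running average $b_k$ of the per-iteration responses, matching (\ref{eq:b_update}). On the agent side, I would appeal to the appendix result that the empirical transition probabilities $W(has')/W(ha)$ converge to the BAMDP transition kernel induced by the average adversary belief $\bar b_\mathrm{adv}=b_k$; consequently the dynamic-programming sweep \textsc{ComputeQValues} returns estimates of $\Vstark$ whose greedy policy realizes $\BReps(b_k)$, while the counts $W_\mathrm{br}(ha)$ accumulate precisely the $1/k$-weighted running average of these greedy responses, so that \textsc{AvgAction} samples the GWFP average strategy $\pi_k$.

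Second I would establish that the best-response errors vanish. For the adversary, the suboptimality of the LP solution is bounded by $\max_i|\Vmodel(i)-V_{\pi_k}(i)|$ (as noted after (\ref{eqn:LP})); since $\Vmodel(i)$ is a $1/k$-averaged sequence of $w$-weighted rollout returns, a law-of-large-numbers / stochastic-approximation argument—using boundedness of returns over the finite horizon $H$—gives $\Vmodel(i)\to V_{\pi_k}(i)$ in probability. For the agent, convergence of the empirical kernel together with finiteness of $\mathcal{S},\mathcal{A},H$ propagates through the backups to give convergence of the $Q$-estimates to $\Vstark$, hence an $\epsilon_k$-best response with $\epsilon_k\to 0$. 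This verifies the GWFP hypotheses: $\alpha_k=1/k$ satisfies $\sum_k\alpha_k=\infty$ and $\alpha_k\to 0$, and the response errors form a vanishing sequence, so the deterministic convergence result applies on the event that the estimation errors converge.

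The main obstacle I anticipate is the probabilistic estimation layer: because every iteration writes into one shared tree the samples are not independent across iterations, and the belief driving the importance weights $w=M\cdot\badv$ is itself drifting with $k$. I would handle this by writing the $\Vmodel$ and tree-value recursions as Robbins–Monro updates with a martingale-difference noise term, verifying that the noise is bounded (finite horizon, bounded rewards) and that its $\alpha_k$-weighted cumulative contribution converges—exactly the class of perturbation GWFP tolerates \cite{leslie2006generalised}. Controlling the interaction between the slowly drifting weights and the accumulating statistics, i.e. showing that the bias from reusing early, inaccurate tree statistics is washed out by the $1/k$ averaging, is the crux; once the estimation errors are shown to vanish in probability, the GWFP machinery delivers $\lim_{k\to\infty}\pi_k\in\Pi^*$ in probability.
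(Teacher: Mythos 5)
Your proposal follows essentially the same route as the paper: cast \algNameFull{}'s bookkeeping as a GWFP process, show the model-value estimates (bounded-return law of large numbers) and the tree-value estimates (convergence of the empirical transition kernel $W(has')/W(ha)$ propagated through the finite-horizon backups) yield $\epsilon_k$-best responses with $\epsilon_k \to 0$, then invoke GWFP convergence for two-player zero-sum games plus the minimax theorem to conclude $\lim_{k\to\infty}\pi_k \in \Pi^*$. The only substantive difference is in how the sampling noise is absorbed: you propose routing it through GWFP's perturbation term $M_n$ via a martingale/Robbins--Monro argument (a reasonable, arguably more careful treatment of the dependence induced by the drifting adversarial weights), whereas the paper sets $M_k = 0$, folds all error into the vanishing $\epsilon_k$, and uses Hoeffding's inequality for the model values together with an SLLN for independent but non-identically distributed variables for the tree statistics.
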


The proof of this result is given in the appendix.

Separating Q value estimation from updating visitation counts in the tree through simulation is more convenient to analyze, but requires iterating through the entire tree at each step of the algorithm. In contrast, many tree search algorithms, such as UCT \cite{kocsis2006bandit}, use stochastic approximation to estimate the Q values via iterative updating, rather than recomputation. \algNameInc{} is an incremental version of \algNameFull{}, which does not call \textsc{ComputeQValues}, and instead adds the incremental updates to $Q(h,a)$ and $V(h)$ into \textsc{Simulate} (see Algorithm \ref{alg:ramcp-simulate}, lines \ref{alg:q_function_update} and \ref{alg:value_function_update}).

While tools from stochastic approximation can be used to prove convergence when the underlying distribution is stationary, this is not the case in the incremental setting, as the adversarial distribution over models is being recomputed at every iteration of GWFP. We observe that in practice, this incremental version also performs well. Popular techniques for Monte Carlo tree search based on non-uniform sampling such as UCT interact with the constantly updated adversarial distribution in a way that is hard to theoretically characterize. However, the added step of solving the LP to compute the adversarial belief is only a minor increase in computational effort for most reasonably-sized problems. Thus, analysis on the interaction of these tree search methods on the non-stationary tree is a promising but mathematically complex avenue of future work, that may result in algorithms that further increase robustness at little computational cost.

\section{Experiments}
We present experimental results for an $n$-pull multi-armed bandit problem, as well as for a patient treatment scenario. The bandit problem is designed to show the fundamental features of the \algName{} algorithm, while the patient treatment example is a larger scale example motivated by a real-world challenge. In both experiments, we use the $\textrm{CVaR}_\alpha$ risk metric, which at a given $\alpha$-quantile corresponds to the expectation over the $\alpha$ fraction worst-case outcomes. For $\alpha=1$, this corresponds to the risk-neutral expectation, and in the limit as $\alpha \to 0^+$, this corresponds to the worst-case metric. For CVaR, the risk polytope $\mathcal{B}$ may be stated in closed form \cite{majumdar2017should}.

\begin{figure*}[t!]
    \centering
        \includegraphics[width=\textwidth]{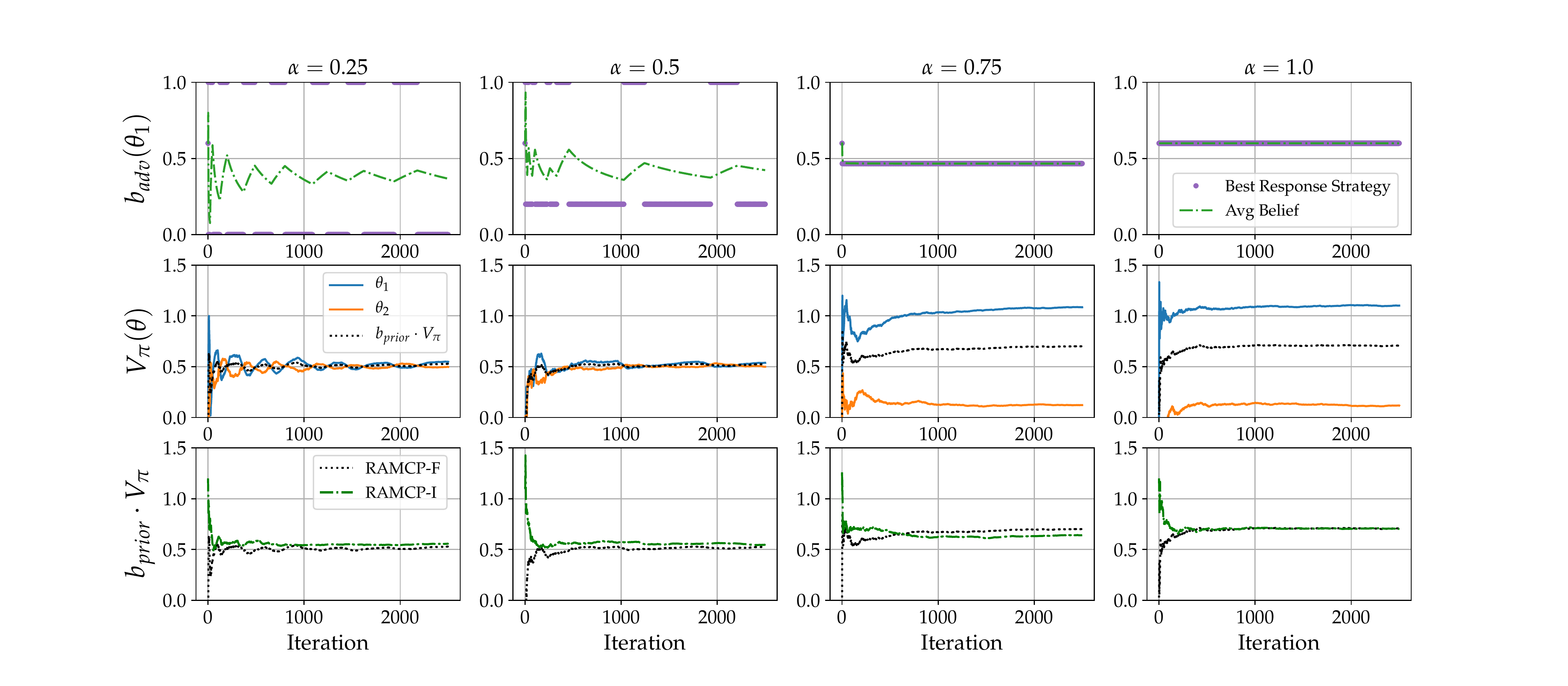}
        \caption{Convergence of the \algNameFull{} algorithm for different CVaR quantiles ($\alpha$). The upper row shows best response belief perturbations (blue) and the running average adversarial belief perturbation (orange) for various CVaR quantiles, $\alpha$. The second row shows the value function for the policy generated by \algNameFull{} operating in an environment with dynamics parameterized by $\theta_1$, and by $\theta_2$ (blue and orange respectively). The dotted line denotes the expected value of the policy under the prior belief over models. The bottom row compares the estimated value under the prior belief as computed by \algNameFull{} against that computed by \algNameInc{}. We see that the two algorithms converge to similar values.}
    \label{fig:bandit_conv}
\end{figure*}

\label{sec:experiments}

\subsection{Multi-armed Bandit}

We consider a multi-armed bandit scenario to illustrate several properties of the \algName{} algorithm. Given a finite nnumber of "pulls" of the bandit, and a finite number of reward realizations, we can represent this scenario as an BAMDP. There is one initial state, in which the agent has four possible actions, which each have different stochastic transition models to 6 possible rewarding states. The rewarding states transition deterministically back to the decision state, where the agent must choose another arm to pull.

We define two possible models, each with different transition probabilities that are known in the planning problem. For illustrative purposes, we chose the transition probabilities to highlight the trade-off between exploitation, exploration and risk. Actions 1 and 2 are \textit{exploratory}: deterministic under each model, and therefore reveal the true model. The actions differ in terms of \textit{risk}: action 1 gives low but similar rewards in both models, while action 2's reward is higher in expectation but varies drastically between the two models. Actions 3 and 4 have more stochastic outcomes under both models, and thus reveal less about the true model. However, these actions serve as choices for \textit{exploitation}, with action 3 offering the highest reward in expectation in $\theta_1$, and action 4 in $\theta_2$.
The prior belief is $(0.6, 0.4)$ for $\theta_1$ and $\theta_2$, respectively. An episode consists of two pulls (or two actions) in the environment.

On this small scale example, we test both \algNameFull{} and \algNameInc{}, and demonstrate that both converge to empirically similar solutions.
Figure \ref{fig:bandit_conv} shows the performance of \algName{} under varying values of $\alpha$. The top row shows the adversarially perturbed belief over iterations of the outer loop of \algNameFull{}. The purple points show the solution to Equation \ref{eqn:LP}. These may switch rapidly, as the adversary will aim to place as little probability mass on the high value models as possible, to minimize the expected value of a given policy. This behavior can be seen for $\alpha = 0.25$. Indeed, this demonstrates the necessity of the averaging in the GWFP process. While the best response beliefs change rapidly, the running average of the adversarially perturbed belief converges, shown in orange. In the second row of Figure \ref{fig:bandit_conv}, the estimated values $\hat{V}_{\pi_k}$ for each $\theta_i$ are plotted. The dotted line denotes the expected value of these estimates with respect to the prior belief. As the CVaR quantile $\alpha$ decreases, the expected reward with respect to the prior belief over models decreases, but the performance for the worst-case model improves. This illustrates how by choosing the risk metric used by \algNameFull{} (here by tuning the $\alpha$-quantile of the CVaR metric), the user can obtain policies that meet their standards for robustness. The last row of Figure \ref{fig:bandit_conv} compares the value under the prior belief as estimated by \algNameFull{} against that estimated by \algNameInc{}. We see that \algNameInc{} converges to similar values as \algNameFull{}, suggesting that it is a reasonable approximation to the asymptotically optimal algorithm.

Figure \ref{fig:bandit_prior} shows the mean performance of the policies obtained through both \algNameFull{} and \algNameInc{} on the bandit problem. We plot the mean performance of the algorithm as the distribution over underlying models is allowed to shift adversarially away from the prior distribution that was used in planning. We see that \algName{} is able to sacrifice expected performance under the provided prior in exchange for robustness to an incorrect prior distribution. Further, we see that both \algNameFull{} and \algNameInc{} have the same performance, which further supports \algNameInc{} as a practically useful modification of \algNameFull{}.

\begin{figure}[t]
    \includegraphics[width=0.95\linewidth]{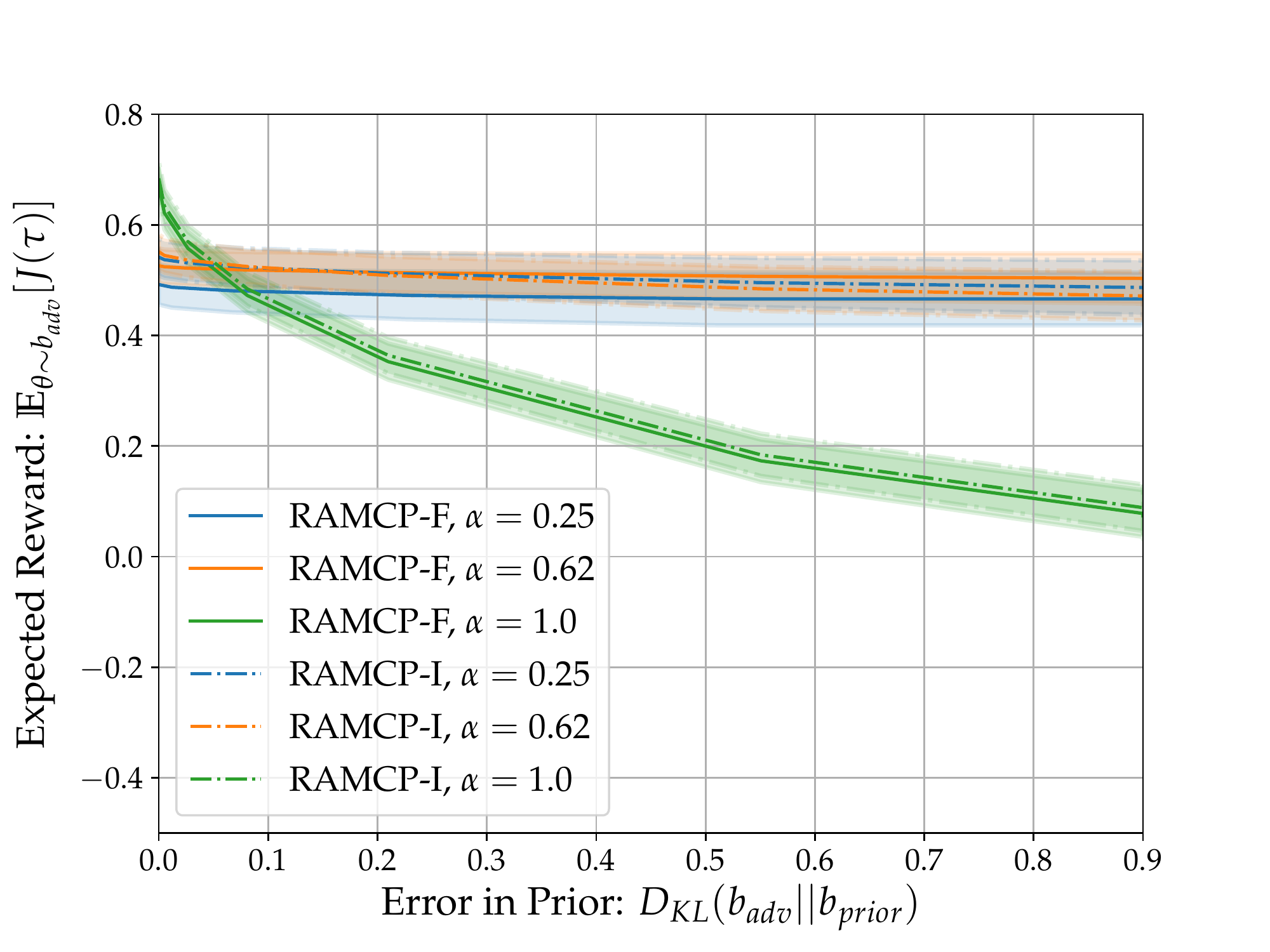}
    \caption{Expected total reward obtained in the bandit problem if the prior is inaccurate. Note that by tuning the CVaR quantile ($\alpha$) used in the RAMCP algorithm, the resulting policy sacrifices expected performance under the prior to become more robust against distributions over $\theta$ that differ from the prior. Error bars denote 90\% confidence bounds on the mean total reward. We observe that on this problem, the \algNameInc{} algorithm matches the performance of the asymptotically optimal \algNameFull{}}
    \label{fig:bandit_prior}
\end{figure}

\subsection{Patient Treatment}

The problem of developing patient treatment plans is one where being robust yet adaptive to model uncertainty is critical. Each individual patient may respond in different ways to a given treatment strategy, and therefore exploratory actions for model disambiguation are often required. There are many such problems in medicine that have been formulated under the BAMDP or POMDP framework, including choosing drug infusion regimens \cite{hu1996comparison}, or HIV treatment plans \cite{attarian2017optimal}. With human lives at stake, being robust towards this model uncertainty is also critical, and thus RAMCP offers an attractive solution approach. We demonstrate the effectiveness of RAMCP in such a domain by developing a simplified problem which captures the complexity of such tasks.

We consider a model where the state is the patient's health: $s \in \mathcal{S} = \{0,1,\dots,19\}$, where $s=0$ corresponds to death, and $s=19$ corresponds to a full health level. The patient starts at $s=3$. The action space is $\mathcal{A} = \{1,2,3\}$, corresponding to different treatment options. We randomly generated 15 possible response profiles to each treatment option. Each response profile assigns a probability mass to a relative change in patient health. Under the different response profiles, this change may be positive or negative, so rapid model identification is important.
The exact method used to generate the transition probabilities is provided in the supplementary materials.
We consider a prior which assigns a weight of $0.25$ to the first response profile, and 0.0536 to the remaining 14.

Figure \ref{fig:patient_prior} shows the performance of the RAMCP algorithm on this problem, run for 12500 iterations with a search horizon $H=4$, 500 times for each $\alpha$-quantile. As can be seen in the figure, the choice of the CVaR quantile $\alpha$ controls the robustness of the resulting plan to an incorrect prior. The risk-neutral setting ($\alpha = 1.0$) yields the highest expected performance assuming an accurate prior, but its performance can degrade rapidly as the distribution that the model is sampled from changes. Rather than force a practitioner into this tradeoff, \algName{} allows them to encode a preference towards robustness by altering the risk metric. The more robust setting of $\alpha = 0.2$ yields a policy with performance that is less sensitive to the underlying distribution over models, at the cost of a worse reward under the prior distribution. Notably, $\alpha = 0.6$ offers the practitioner an attractive middle ground, offering improved robustness compared to the risk-neutral policy at a small cost to performance on the prior.

We compared the policies obtained by optimizing the coherent CVaR objective through \algName{} against policies optimized with an exponentially weighted objective $\tilde{r} = \exp(-\gamma r)$, a common heuristic for incorporating risk-sensitivity into optimization. While reweighting the reward in this way allows using standard BAMDP algorithms and avoids the game-theoretic formulation, the exponential risk formulation is not a coherent risk metric. Furthermore, it does not allow addressing the model uncertainty separately from the transition stochasticity. Experimentally, this is evident in Figure \ref{fig:patient_prior}. While using exponential reward shaping to add robustness to the BAMDP objective does give resulting policies that are somewhat less sensitive to the underlying prior, the expected performance drops under all possible model distributions.

\begin{figure}[t]
    \includegraphics[width=0.95\linewidth]{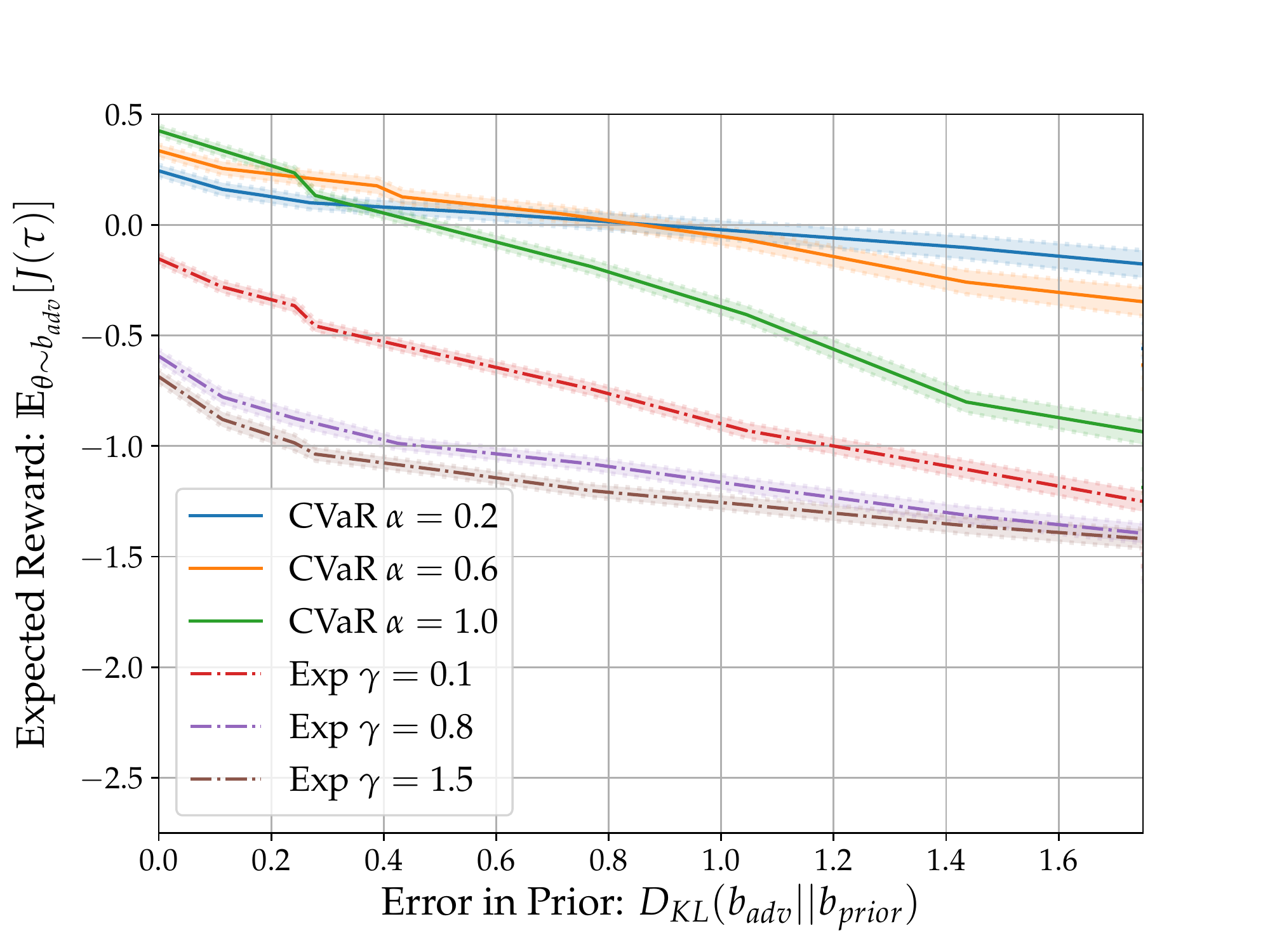}
    \caption{Expected total reward obtained in the patient treatment problem if the prior is inaccurate. As in the bandit problem, tuning the CVaR quantile ($\alpha$) enables the \algNameInc{} algorithm to choose plans that are more robust against distributions over $\theta$ that differ from the prior. A na\"ive approach to add risk sensitivity to the BAMDP via exponential cost shaping yields policies that perform consistently worse, as shown in the dashed-dotted curves. Error bars denote 90\% confidence bounds on the mean total reward.} \vspace{-1mm}
    \label{fig:patient_prior}
\end{figure}

\section{Related Work}
\label{sec:related}

This work aims to optimally trade off between exploration, exploitation, and robustness in the problem of planning under model uncertainty. This problem has been approached in the bandit setting \cite{galichet2013exploration}, but little work has been done in the MDP setting. Generally, \algName{} can be seen as an extension of the Bayes-Adaptive MDP literature toward safety and robustness. In the discrete MDP case, approaches toward the BAMDP problem are often applications of efficient search techniques for MDPs and POMDPs \cite{wang2005bayesian} \cite{guez2013scalable}. Approaches in the continuous case often use heuristics \cite{bai2013planning} or leverage tools from adaptive control theory \cite{slade2017simultaneous}, \cite{yu2017preparing}. However, while tools from adaptive control are capable of guaranteeing safety while optimizing performance, they typically do not consider the value of information in performance optimization \cite{aastrom2013adaptive}.

Robust MDPs \cite{nilim2005robust} are a popular tool for planning under model uncertainty, however they have several features which produces undesirable behavior. First, these problems are typically computationally intractable for general uncertainty sets \cite{bagnell2001solving}. Additionally, this framework often leads to extremely conservative policies. Finally, in the BAMDP setting, information gain that simply reweights the probability of elements in an uncertainty set would not change the robust objective, and so the interaction with Bayes-adaptive models is poor. Several of the weaknesses of the robust MDP framework are mitigated by risk-averse approaches to policy optimization \cite{howard1972risk}, which allow a more naturally tunable notion of conservatism \cite{chow2015risk}. However, both the risk-sensitive and robust MDP frameworks do not generally consider that the uncertainty over dynamics models may be reduced as the agent interacts with the environment, still leading to unnecessarily conservative policies that do not consider the value of information gathering actions. We address this problem by introducing notions of risk-sensitivity to BAMDPs.

\section{Discussion \& Conclusions}
\label{sec:conclusion}

This paper has introduced \algName{}, which marks what we believe to be a first attempt toward efficient methods of computing policies that optimally balance exploration, exploitation, and robustness. This approach has demonstrated good experimental performance on discrete MDPs, typically requiring only minor increases in computation over standard BAMDP solution approaches. There are several clear directions for future work in the effort to bring this tradeoff between safety and performance to more complex domains. First, further analysis into the \algNameInc{} formulation of \algName{} are promising, as the algorithm achieves good performance in practice. Second, through the use of value function approximation, the algorithm could be applied to larger, even continuous MDPs. Such approaches have shown good performance for risk-neutral BAMDPs \cite{guez2014bayes}. Another promising direction for future work is to investigate if the algorithm can be modified to maintain convergence guarantees with more advanced sampling strategies such as UCT \cite{kocsis2006bandit}. Finally, this work focused on planning over a discrete distribution over models. Future work may investigate whether sampling strategies such as sequential Monte Carlo can be used to extend \algName{} to continuous beliefs over models parameters.

\subsubsection*{Acknowledgements}
Apoorva Sharma was supported in part by the Stanford School of Engineering Fellowship. James Harrison was supported in part by the Stanford Graduate Fellowship and the National Sciences and Engineering Research Council (NSERC). The authors were partially supported by the Office of Naval Research, Young Investigator Program, and by DARPA, Assured Autonomy program.

\bibliographystyle{aaai}
\bibliography{biblio}

\begin{thebibliography}{}

\bibitem[\protect\citeauthoryear{Artzner \bgroup et al\mbox.\egroup
  }{1999}]{artzner1999coherent}
Artzner, P.; Delbaen, F.; Eber, J.-M.; and Heath, D.
\newblock 1999.
\newblock Coherent measures of risk.
\newblock {\em Mathematical finance}.

\bibitem[\protect\citeauthoryear{Astr{\"o}m and
  Wittenmark}{2013}]{aastrom2013adaptive}
Astr{\"o}m, K.~J., and Wittenmark, B.
\newblock 2013.
\newblock {\em Adaptive control}.
\newblock Courier Corporation.

\bibitem[\protect\citeauthoryear{Attarian and Tran}{2017}]{attarian2017optimal}
Attarian, A., and Tran, H.
\newblock 2017.
\newblock An optimal control approach to structured treatment interruptions for
  hiv patients: a personalized medicine perspective.
\newblock {\em Applied Mathematics}.

\bibitem[\protect\citeauthoryear{Bagnell, Ng, and
  Schneider}{2001}]{bagnell2001solving}
Bagnell, J.~A.; Ng, A.~Y.; and Schneider, J.~G.
\newblock 2001.
\newblock Solving uncertain markov decision processes.
\newblock {\em Neural Information Processing Systems}.

\bibitem[\protect\citeauthoryear{Bai, Hsu, and Lee}{2013}]{bai2013planning}
Bai, H.; Hsu, D.; and Lee, W.~S.
\newblock 2013.
\newblock Planning how to learn.
\newblock {\em International Conference on Robotics and Automation}.

\bibitem[\protect\citeauthoryear{Brown}{1949}]{brown1949some}
Brown, G.~W.
\newblock 1949.
\newblock Some notes on computation of games solutions.
\newblock Technical report, RAND Corporation.

\bibitem[\protect\citeauthoryear{Chen \bgroup et al\mbox.\egroup
  }{2016}]{chen2016pomdp}
Chen, M.; Frazzoli, E.; Hsu, D.; and Lee, W.~S.
\newblock 2016.
\newblock {POMDP}-lite for robust robot planning under uncertainty.
\newblock {\em International Conference on Robotics and Automation}.

\bibitem[\protect\citeauthoryear{Chow \bgroup et al\mbox.\egroup
  }{2015}]{chow2015risk}
Chow, Y.; Tamar, A.; Mannor, S.; and Pavone, M.
\newblock 2015.
\newblock Risk-sensitive and robust decision-making: a {CVaR} optimization
  approach.
\newblock {\em Neural Information Processing Systems}.

\bibitem[\protect\citeauthoryear{Duff}{2002}]{duff2002optimal}
Duff, M.~O.
\newblock 2002.
\newblock {\em Optimal Learning: Computational procedures for Bayes-adaptive
  Markov decision processes}.
\newblock Ph.D. Dissertation, University of Massachusetts at Amherst.

\bibitem[\protect\citeauthoryear{Eichhorn and
  R{\"o}misch}{2005}]{eichhorn2005polyhedral}
Eichhorn, A., and R{\"o}misch, W.
\newblock 2005.
\newblock Polyhedral risk measures in stochastic programming.
\newblock {\em Journal on Optimization}.

\bibitem[\protect\citeauthoryear{Galichet, Sebag, and
  Teytaud}{2013}]{galichet2013exploration}
Galichet, N.; Sebag, M.; and Teytaud, O.
\newblock 2013.
\newblock Exploration vs exploitation vs safety: Risk-aware multi-armed
  bandits.
\newblock {\em Asian Conference on Machine Learning}.

\bibitem[\protect\citeauthoryear{Ghavamzadeh \bgroup et al\mbox.\egroup
  }{2015}]{ghavamzadeh2015bayesian}
Ghavamzadeh, M.; Mannor, S.; Pineau, J.; Tamar, A.; et~al.
\newblock 2015.
\newblock Bayesian reinforcement learning: A survey.
\newblock {\em Foundations and Trends in Machine Learning}.

\bibitem[\protect\citeauthoryear{Gittins, Glazebrook, and
  Weber}{2011}]{gittins2011multi}
Gittins, J.; Glazebrook, K.; and Weber, R.
\newblock 2011.
\newblock {\em Multi-armed bandit allocation indices}.
\newblock John Wiley \& Sons.

\bibitem[\protect\citeauthoryear{Glover and Doyle}{1988}]{glover1988state}
Glover, K., and Doyle, J.~C.
\newblock 1988.
\newblock State-space formulae for all stabilizing controllers that satisfy an
  h$_\infty$-norm bound and relations to relations to risk sensitivity.
\newblock {\em Systems \& control letters}.

\bibitem[\protect\citeauthoryear{Guez \bgroup et al\mbox.\egroup
  }{2014}]{guez2014bayes}
Guez, A.; Heess, N.; Silver, D.; and Dayan, P.
\newblock 2014.
\newblock Bayes-adaptive simulation-based search with value function
  approximation.
\newblock {\em Neural Information Processing Systems}.

\bibitem[\protect\citeauthoryear{Guez, Silver, and
  Dayan}{2013}]{guez2013scalable}
Guez, A.; Silver, D.; and Dayan, P.
\newblock 2013.
\newblock Scalable and efficient bayes-adaptive reinforcement learning based on
  monte-carlo tree search.
\newblock {\em Journal of Artificial Intelligence Research}.

\bibitem[\protect\citeauthoryear{Heinrich, Lanctot, and
  Silver}{2015}]{heinrich2015fictitious}
Heinrich, J.; Lanctot, M.; and Silver, D.
\newblock 2015.
\newblock Fictitious self-play in extensive-form games.
\newblock {\em International Conference on Machine Learning}.

\bibitem[\protect\citeauthoryear{Howard and Matheson}{1972}]{howard1972risk}
Howard, R.~A., and Matheson, J.~E.
\newblock 1972.
\newblock Risk-sensitive markov decision processes.
\newblock {\em Management science}.

\bibitem[\protect\citeauthoryear{Hu, Lovejoy, and
  Shafer}{1996}]{hu1996comparison}
Hu, C.; Lovejoy, W.~S.; and Shafer, S.~L.
\newblock 1996.
\newblock Comparison of some suboptimal control policies in medical drug
  therapy.
\newblock {\em Operations Research}.

\bibitem[\protect\citeauthoryear{Kocsis and
  Szepesv{\'a}ri}{2006}]{kocsis2006bandit}
Kocsis, L., and Szepesv{\'a}ri, C.
\newblock 2006.
\newblock Bandit based monte-carlo planning.
\newblock {\em European Conference on Machine Learning}.

\bibitem[\protect\citeauthoryear{Leslie and
  Collins}{2006}]{leslie2006generalised}
Leslie, D.~S., and Collins, E.~J.
\newblock 2006.
\newblock Generalised weakened fictitious play.
\newblock {\em Games and Economic Behavior}.

\bibitem[\protect\citeauthoryear{Majumdar and
  Pavone}{2017}]{majumdar2017should}
Majumdar, A., and Pavone, M.
\newblock 2017.
\newblock How should a robot assess risk? {Towards} an axiomatic theory of risk
  in robotics.
\newblock {\em International Symposium on Robotics Research}.

\bibitem[\protect\citeauthoryear{Martin}{1967}]{martin1967bayesian}
Martin, J.~J.
\newblock 1967.
\newblock {\em Bayesian decision problems and Markov chains}.
\newblock Wiley.

\bibitem[\protect\citeauthoryear{Nilim and El~Ghaoui}{2005}]{nilim2005robust}
Nilim, A., and El~Ghaoui, L.
\newblock 2005.
\newblock Robust control of markov decision processes with uncertain transition
  matrices.
\newblock {\em Operations Research}.

\bibitem[\protect\citeauthoryear{Osogami}{2012}]{osogami2012robustness}
Osogami, T.
\newblock 2012.
\newblock Robustness and risk-sensitivity in markov decision processes.
\newblock {\em Neural Information Processing Systems}.

\bibitem[\protect\citeauthoryear{Rabin and Thaler}{2001}]{rabin2001anomalies}
Rabin, M., and Thaler, R.~H.
\newblock 2001.
\newblock Anomalies: risk aversion.
\newblock {\em Journal of Economic perspectives}.

\bibitem[\protect\citeauthoryear{Slade \bgroup et al\mbox.\egroup
  }{2017}]{slade2017simultaneous}
Slade, P.; Culbertson, P.; Sunberg, Z.; and Kochenderfer, M.~J.
\newblock 2017.
\newblock Simultaneous active parameter estimation and control using
  sampling-based {B}ayesian reinforcement learning.
\newblock {\em International Conference on Intelligent Robots and Systems}.

\bibitem[\protect\citeauthoryear{Tamar \bgroup et al\mbox.\egroup
  }{2017}]{tamar2017sequential}
Tamar, A.; Chow, Y.; Ghavamzadeh, M.; and Mannor, S.
\newblock 2017.
\newblock Sequential decision making with coherent risk.
\newblock {\em Transactions on Automatic Control}.

\bibitem[\protect\citeauthoryear{Wang \bgroup et al\mbox.\egroup
  }{2005}]{wang2005bayesian}
Wang, T.; Lizotte, D.; Bowling, M.; and Schuurmans, D.
\newblock 2005.
\newblock Bayesian sparse sampling for on-line reward optimization.
\newblock {\em International Conference on Machine Learning}.

\bibitem[\protect\citeauthoryear{Yu, Liu, and Turk}{2017}]{yu2017preparing}
Yu, W.; Liu, C.~K.; and Turk, G.
\newblock 2017.
\newblock Preparing for the unknown: Learning a universal policy with online
  system identification.
\newblock {\em Robotics: Science and Systems}.

\end{thebibliography}

\clearpage
\appendix

\section{Proof of Theorem \ref{thm:conv}}
\label{sec:gwfp}

\subsection{Generalized Weakened Fictitious Play}

In this section we prove Theorem \ref{thm:conv}, and provide necessary background material. We begin by formally introducing Generalized Weakened Fictitious Play (GWFP) \cite{leslie2006generalised}, and restating the core convergence results from that work.

Generally, we consider a repeated $N$-player normal-form game. We will write the pure strategy set of player $i$ as $A^i$, and the mixed strategy set as $\Delta^i$, where a mixed strategy is a distribution over pure strategies. Let $r^i : \times_{i=1}^{N} \Delta^i \to \mathbb{R}$ denote the bounded reward function of player $i$ (where $\times_{i=1}^{N}$ denotes the Cartesian product). Then, letting the $\pi^{-i}$ denote a set of mixed strategies for all players but player $i$, let $r^i(\pi^i, \pi^{i-1})$ denote the expected reward to player $i$ selecting strategy $\pi^i$, if all other players select $\pi^{i-1}$. We will write the best response of player $i$ to $\pi^{i-1}$ as
\begin{equation*}
    \BR^i(\pi^{i-1}) = \arg \max_{\pi^i \in \Delta^i} r^i(\pi^i, \pi^{i-1}),
\end{equation*}
and thus, we can write the collection of best responses as
\begin{equation*}
    \BR(\pi) = \times_{i=1}^{N} \BR^i(\pi^{i-1}).
\end{equation*}
Finally, in a similar fashion, we will define $\epsilon$-best response of player $i$ to be the set
\begin{equation*}
    \BReps^i(\pi^{-i}) = \left\{\pi^i \in \Delta^i : r^i(\pi^i, \pi^{-i}) \geq r^i(\BR^i(\pi^{-i}), \pi^{-i})-\epsilon \right\}.
\end{equation*}
The set of $\epsilon$-best strategies is defined in the same way as above, as is written $BR_\epsilon(\pi)$. We may now formally define the GWFP update process.

\begin{definition}[Generalized Weakened Fictitious Play (GWFP) Process \cite{leslie2006generalised}] A GWFP process is any process $\{\sigma_n\}_{m\geq0}$, with $\sigma_n \in \times^{N}_{i=1} \Delta^i$, such that
\begin{equation}
\label{eqn:gwfp}
    \sigma_{n+1} \in (1-\alpha_{n+1}) \sigma_n + \alpha_{n+1} (\BR_{\epsilon_n}(\sigma_n) + M_{n+1})
\end{equation}
with $\alpha_n \to 0$ and $\epsilon_n \to 0$ as $n \to \infty$,
\begin{equation*}
    \sum_{n\geq1}\alpha_n = \infty,
\end{equation*}
and $\{M_n\}_{n\geq1}$ is a sequence of perturbations such that, for any $T>0$,
\begin{equation*}
    \lim_{n \to \infty} \sup_{k} \left\{ \left\| \sum_{i=n}^{k-1} \alpha_{i+1} M_{i+1}\right\| : \sum_{i=n}^{k-1} \alpha_{i+1} \leq T \right\} = 0.
\end{equation*}
\end{definition}

Equation \ref{eqn:gwfp} is written as an inclusion, and thus can be thought of as defining a set of possible next strategy profiles. In practice, we only compute one strategy profile which lies in the set $\BReps(\pi)$, and assign $\sigma_{n+1}$ to the corresponding mixed profile. Thus, the reader should think of this equation as defining an iterative updating of the strategy profiles.

Critically, this definition allows us to establish guarantees on convergence in certain cases.

\begin{lemma}[Convergence of GWFP Processes]
\label{lem:conv}
Any GWFP process will converge to the set of Nash equilibria in two-player zero-sum games, in potential games, and in generic $2 \times m$ games.
\end{lemma}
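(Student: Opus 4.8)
The plan is to recognize the GWFP recursion as a discrete-time stochastic approximation of a continuous-time differential inclusion, and then to combine the theory of stochastic approximation for differential inclusions (in the style of Bena\"im, Hofbauer, and Sorin) with a Lyapunov argument tailored to each of the three game classes. First I would rewrite the update in Equation \ref{eqn:gwfp} in increment form,
\[
  \sigma_{n+1} - \sigma_n \in \alpha_{n+1}\bigl( \BReps(\sigma_n) - \sigma_n + M_{n+1} \bigr),
\]
so that the associated mean-field dynamics is the best-response differential inclusion $\dot{\sigma} \in \BR(\sigma) - \sigma$, i.e.\ continuous-time fictitious play. The hypotheses on the step sizes ($\alpha_n \to 0$, $\sum_n \alpha_n = \infty$) and the control on the perturbation sequence $\{M_n\}$ are precisely the conditions under which the affine interpolation of $\{\sigma_n\}$ is an \emph{asymptotic pseudotrajectory} (APT) of this inclusion; the vanishing of $\epsilon_n$ guarantees that using $\epsilon$-best responses instead of exact best responses contributes an error that is absorbed into the APT estimate.

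Next I would invoke the limit-set theorem for APTs: the set of limit points of an asymptotic pseudotrajectory is \emph{internally chain transitive} (ICT) for the set-valued flow induced by the inclusion. This reduces the claim to showing that every ICT set of the best-response dynamics is contained in the set of Nash equilibria, which is where the three game classes are handled separately. For potential games the game potential is a strict Lyapunov function for the best-response flow whose critical set is exactly the equilibria. For two-player zero-sum games (the relevant case for Equation \ref{eqn:opt_objective}) the natural Lyapunov function is the duality gap
\[
  W(\sigma) = \max_{\pi^1} r^1(\pi^1, \sigma^2) + \max_{\pi^2} r^2(\sigma^1, \pi^2),
\]
which by $r^1 + r^2 = 0$ equals the sum of the two players' regrets: it is nonnegative, vanishes precisely on Nash equilibria, and strictly decreases along non-equilibrium solutions. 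For generic $2 \times m$ games one substitutes the known convergence of continuous-time fictitious play (Berger) for an explicit potential. In each case, a Lyapunov function whose zero set is the equilibrium set forces any ICT set to lie within that zero set.

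Combining the APT property with the Lyapunov characterization then yields that the limit points of $\{\sigma_n\}$ lie in the Nash set, which is the desired convergence to the set of equilibria. I expect the main obstacle to be the zero-sum case: establishing that $W$ strictly decreases along solutions of the differential inclusion requires care because $\BR$ is set-valued and only upper hemicontinuous, so the Lyapunov decrease must be verified for every measurable selection of the inclusion rather than for a single vector field, and the $\epsilon$-suboptimality must be shown to perturb this decrease by a controllable, vanishing amount. The remaining work, verifying that the perturbation condition delivers the APT property and dispatching the $2 \times m$ case, is comparatively routine given the cited stochastic-approximation machinery.
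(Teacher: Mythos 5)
Your proposal is correct, but note that the paper does not prove this lemma at all: its ``proof'' is a one-line restatement of Corollary 5 in the cited work of Leslie and Collins (2006), so the only argument to compare against is the one in that reference. What you have written is essentially a faithful reconstruction of that reference's actual proof --- casting GWFP as a perturbed solution (asymptotic pseudotrajectory) of the best-response differential inclusion via the Bena\"im--Hofbauer--Sorin machinery, invoking the limit-set theorem to get internal chain transitivity, and then handling the three game classes with the duality-gap Lyapunov function (Hofbauer--Sorin) for zero-sum games, the potential for potential games, and Berger's result for generic $2 \times m$ games --- so your approach coincides with the underlying argument the paper defers to, and your flagged concern about verifying the Lyapunov decrease over all measurable selections of the set-valued map is exactly the technical care that reference takes.
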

\begin{proof}
Restatement of Corollary 5 in \cite{leslie2006generalised}.
\end{proof}

\subsection{Model Value Estimation}

Having formalized the GWFP process and defined relevant notation, we next define notation specific to the \algName{} algorithm, and relate the terms to lines in the algorithm and variables in the GWFP process. In \algName{} the two agents are the adversary, whose strategy $\badv$ is a mixed strategy among different model choices, and the agent, whose strategy is a history dependent policy $\pi$. The approximate best response calculations for both agents are carried out through maintaining value estimates and then acting optimally assuming the value estimates are accurate. Therefore, the suboptimality of the best-response policies $\BReps(\pi_k)$ and $\BReps(b_k)$ is bounded by the errors of the value estimates $\hat{V}_{\pi_k}$ and $\hat{Q}_(h,a)$ respectively. We therefore begin by showing that as $k \rightarrow \infty$, these errors go to zero in probability.

For the purposes of the following proof, we assume without loss of generality that the total rewards over the planning horizon are bounded with a range of $V_\textrm{range}$. Furthermore, throughout this section we assume the $b_k$ and $\pi_k$ are the belief and agent policy that follow the iterative averaging process that is analogous to the GWFP process defined above. For ease of notation, we refer to the approximate best responses played at iteration $k$ by the adversary as $b^*_k \in BR_\epsilon(\pi_{k-1})$, and those played by the agent as $\pi^*_k \in BR_\epsilon(\pi_{k-1})$.

\begin{lemma}[Convergence of Model Value Estimates]
\label{lem:v}
Let $V_{\pi_k}(\theta_i)$ correspond to the expected reward of policy $\pi_k$ on model $\theta_i$. Let $\hat{V}_{k}(\theta_i)$ be the estimate calculated by the iterative update in line \ref{alg:model_value_update} of Algorithm \ref{alg:ramcp}. Defining the error at iteration $k$ as $\epsilon_k = |\hat{V}_{k}(\theta_i) - V_{\pi_k}(\theta_i)|$, as $k \rightarrow \infty$, $\epsilon_k \rightarrow 0$ in probability.
\end{lemma}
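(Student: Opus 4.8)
The plan is to recognize the update in line~\ref{alg:model_value_update} as an unweighted running average and to split the resulting error into a deterministic ``averaging'' term that vanishes exactly and a stochastic martingale term that vanishes in the limit. First I would unroll the stochastic-approximation recursion with step $1/k$: since $\hat V_k(\theta_i) = \hat V_{k-1}(\theta_i) + \frac1k(\Vbr^{(k)} - \hat V_{k-1}(\theta_i))$, an induction gives $\hat V_k(\theta_i) = \frac1k\sum_{j=1}^k \Vbr^{(j)}(\theta_i)$, where $\Vbr^{(j)}(\theta_i)$ is the greedy-path return produced by \textsc{Simulate} at iteration $j$ on model $\theta_i$. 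The key structural observation, specific to \algNameFull{}, is that the $Q$-values (and hence the greedy policy read off the tree) are held fixed throughout the rollouts of a given outer iteration --- they are recomputed only afterwards by \textsc{ComputeQValues} --- so each rollout follows one fixed deterministic history-dependent policy $\pi^*_j$ with next states drawn freshly from $T_{\theta_i}$. Letting $\mathcal{F}_{j-1}$ be the $\sigma$-algebra generated by the algorithm's state through iteration $j-1$, this makes $\Vbr^{(j)}(\theta_i)$ a conditionally unbiased single-sample Monte Carlo estimate, i.e. $\E[\Vbr^{(j)}(\theta_i)\mid\mathcal{F}_{j-1}] = V_{\pi^*_j}(\theta_i)$.

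Next I would write the error as
\[
  \hat V_k(\theta_i) - V_{\pi_k}(\theta_i)
  = \Big(\tfrac1k\textstyle\sum_{j=1}^k V_{\pi^*_j}(\theta_i) - V_{\pi_k}(\theta_i)\Big)
  + \tfrac1k\textstyle\sum_{j=1}^k\big(\Vbr^{(j)}(\theta_i) - V_{\pi^*_j}(\theta_i)\big).
\]
The first bracket is handled by the GWFP averaging: with $\alpha_k = 1/k$ the agent's running strategy is the uniform mixture $\pi_k = \frac1k\sum_{j=1}^k \pi^*_j$, and because the trajectory return $J(\tau)$ is linear and a mixed policy induces the corresponding convex combination of trajectory distributions, $V_{(\cdot)}(\theta_i)$ is affine in the mixture weights; hence $\frac1k\sum_j V_{\pi^*_j}(\theta_i) = V_{\pi_k}(\theta_i)$ and the first bracket is identically zero. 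Here I would invoke realization-equivalence for games of perfect recall to identify this mixed strategy with the behavioral policy the algorithm actually stores through the $W_\textrm{br}$ counts.

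For the remaining term, set $D_j := \Vbr^{(j)}(\theta_i) - V_{\pi^*_j}(\theta_i)$. By the previous step $\E[D_j\mid\mathcal{F}_{j-1}]=0$, so the partial sums form a martingale, and since returns are bounded in a range of $V_\textrm{range}$ we have $|D_j|\le V_\textrm{range}$. A bounded-increment martingale law of large numbers (equivalently Azuma--Hoeffding, $\mathbb{P}(|\frac1k\sum_{j=1}^k D_j|>\delta)\le 2\exp(-k\delta^2/(2V_\textrm{range}^2))$) then gives $\frac1k\sum_{j=1}^k D_j \to 0$, in probability and in fact almost surely by Borel--Cantelli. Combining the two terms yields $\epsilon_k = |\hat V_k(\theta_i)-V_{\pi_k}(\theta_i)| \to 0$ in probability, as claimed.

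The step I expect to be the main obstacle is the exact vanishing of the first bracket, i.e. justifying that the average of the per-model best-response values equals the value of the averaged policy. This is clean for single-stage mixed strategies but requires care in the history-dependent (extensive-form) setting: one must argue that the GWFP running average is a genuine mixed strategy and that it is realization-equivalent, via Kuhn's theorem, to the behavioral policy $\pi_k$ the algorithm returns. The martingale and stochastic-approximation steps are otherwise standard.
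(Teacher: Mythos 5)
Your proposal is correct and follows essentially the same route as the paper's proof: unroll the $1/k$ update into a running average of greedy-rollout returns, use the fact that the value of the mixed strategy $\pi_k = \frac{1}{k}\sum_{j=1}^k \pi^*_j$ is the corresponding average of the component values $V_{\pi^*_j}(\theta_i)$, and finish with an exponential concentration bound. Where you differ is in the concentration step, and your version is the more careful one. The paper asserts $\E[\hat V_k(\theta_i)] = V_{\pi_k}(\theta_i)$ and then applies plain Hoeffding's inequality, implicitly treating the rollout returns $V_{\mathrm{br},j}$ as independent samples; but since the greedy policy $\pi^*_j$ is read off the evolving tree statistics (which depend on the adversary's past belief choices), these returns are only conditionally unbiased given the past, not i.i.d. Your explicit decomposition into a deterministic averaging term (identically zero by affineness of the value in the mixture weights) plus a martingale-difference average, handled by Azuma--Hoeffding, is exactly the right repair, and it yields the same conclusion and rate (the paper's stated exponent $2k^2\epsilon^2/V_\textrm{range}^2$ should in any case read $2k\epsilon^2/V_\textrm{range}^2$). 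The step you flag as the main obstacle --- identifying the GWFP running average with a genuine mixed strategy whose value is the average of the component values --- is the one the paper dispatches in a single sentence (``this average over policies represents a mixing of strategies''); in this finite-horizon, perfect-recall setting Kuhn's theorem makes that identification, and its realization-equivalence to the $W_\textrm{br}$-based behavioral policy, immediate, so nothing beyond what you sketch is required.
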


\begin{proof}
To show that the iterative estimates $\hat{V}_k(\theta_i)$ serve as an unbiased estimator for $V_{\pi_k}(\theta_i)$, we leverage the fact that $\pi_k$ is a moving average
\begin{align*}
    \pi_k &= (1-\frac{1}{k})\pi_{k-1} + \frac{1}{k}\pi^*_k \\
          &= \frac{1}{k} \sum_{i=1}^{k} \pi^*_i.
\end{align*}
Note that this average over policies represents to a mixing of strategies, i.e. $\pi_k$ is the mixed strategy that randomly picks between the $k$ past best-response policies, and plays that policy. This means that the value of this mixed strategy will be itself a moving average:
\begin{equation*}
    V_{\pi_k}(\theta_i) = (1-\frac{1}{k})V_{\pi_{k-1}}(\theta_i) + \frac{1}{k} V_{\pi^*_k}(\theta_i)
\end{equation*}

In \algName, the approximate best-response policy corresponds to the policy that is greedy with respect to the Q estimates at the start of iteration $k$:  $\pi^*_k(h) = \arg\max_a \hat{Q}_{k-1}(h,a)$. Thus, the total reward obtained when following this greedy policy under model $\theta_i$ is a random variable $V_{\textrm{br},k}$ whose expectation is $V_{\pi^*_k}$. We see that the iterative update in line \ref{alg:model_value_update} of of Algorithm \ref{alg:ramcp} computes $\hat{V}_{\pi_k}(\theta_i)$ as a running average of these $V_{\textrm{br},k}$, and therefore $\E[ \hat{V}_{k}(\theta_i) ] = V_{\pi_k}(\theta_i)$.

As $V_{\textrm{br},k}$ is bounded with a range of $V_\textrm{range}$, Hoeffding's inequality gives
\begin{equation}\label{eqn:model_val_bound}
    P(|\hat{V}_{k}(\theta_i) - V_{\pi_k}(\theta_i)| \ge \epsilon) \le 2 \exp ( - \frac{2k^2\epsilon^2}{V_\textrm{range}^2} )
\end{equation}
which implies that as $k \rightarrow \infty$, $\epsilon_k \rightarrow 0$ in probability.

\end{proof}

\subsection{Tree Value Estimation}

Convergence of the estimates of the value by the tree depends on a small modification of the Strong Law of Large Numbers (SLLN), presented for completeness:

\begin{theorem}[SLLN for independent but not identically distributed random variables]
Let $\{X_k\}_{k=1}^n$ be independent random variables so that $\text{Var}(X_k) < \infty$ for each $k \in \mathbb{N}$ and $\sum_{k=1} \frac{1}{k^2} \text{Var}(X_k) < \infty$. Then
\begin{align}\label{thm:SLLN}
\mathbb{P} \left( \lim_{n \rightarrow \infty} \frac{1}{n} \sum_{k=1}^n \left( X_k - \mathbb{E}[X_k] \right) = 0 \right) = 1.
\end{align}
\end{theorem}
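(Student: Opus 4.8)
The plan is to reduce this Ces\`aro statement to the almost-sure convergence of a single weighted random series and then to apply a deterministic summation lemma. First I would center the variables, setting $Y_k = X_k - \mathbb{E}[X_k]$, so that each $Y_k$ has zero mean, $\text{Var}(Y_k) = \text{Var}(X_k)$, and the $Y_k$ inherit independence from the $X_k$. With this notation, the conclusion (\ref{thm:SLLN}) is precisely the claim that $\frac{1}{n}\sum_{k=1}^n Y_k \to 0$ almost surely.

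The key idea is that a normalized sum of this form is controlled by the convergence of the weighted series $\sum_{k=1}^\infty Y_k / k$. Setting $Z_k = Y_k / k$, these are independent and mean-zero with $\text{Var}(Z_k) = \text{Var}(X_k)/k^2$, so the hypothesis $\sum_k k^{-2}\text{Var}(X_k) < \infty$ becomes exactly $\sum_k \text{Var}(Z_k) < \infty$. I would then establish that the partial sums $S_n = \sum_{k=1}^n Z_k$ converge almost surely. This is the analytical heart of the argument, and I expect it to be the main obstacle: it is not a consequence of an $L^2$ bound on any single sum, but requires uniform control over all the partial sums simultaneously. The standard route is Kolmogorov's maximal inequality, which for independent mean-zero summands gives $\mathbb{P}(\max_{m < j \le n} |S_j - S_m| \ge \lambda) \le \lambda^{-2}\sum_{k=m+1}^n \text{Var}(Z_k)$. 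Letting $n \to \infty$ and then $m \to \infty$, the summability of the variances forces $\mathbb{P}(\sup_{j > m}|S_j - S_m| \ge \lambda) \to 0$ for every $\lambda > 0$, so $\{S_n\}$ is almost surely Cauchy and hence convergent; this is Kolmogorov's one-series theorem.

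Finally I would convert convergence of $\sum_k Y_k/k$ into the desired Ces\`aro limit via Kronecker's lemma: if $\sum_k a_k/b_k$ converges with $0 < b_k \uparrow \infty$, then $b_n^{-1}\sum_{k=1}^n a_k \to 0$. Applying this pathwise with $a_k = Y_k$ and $b_k = k$, on the probability-one event where $\sum_k Y_k/k$ converges, yields $\frac{1}{n}\sum_{k=1}^n Y_k \to 0$ almost surely, which is (\ref{thm:SLLN}). The two remaining ingredients---deriving the maximal inequality from independence and the orthogonality of increments, and the elementary proof of Kronecker's lemma---are classical, and I would either cite them or dispatch them as routine verification.
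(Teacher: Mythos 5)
Your proof is correct. Note that the paper itself does not prove this statement at all: it is stated ``for completeness'' as a known modification of the SLLN (it is Kolmogorov's strong law for independent, non-identically distributed variables), and the authors simply invoke it in the proof of Lemma 3. Your argument --- centering to $Y_k = X_k - \mathbb{E}[X_k]$, passing to $Z_k = Y_k/k$ so the hypothesis becomes summability of $\sum_k \mathrm{Var}(Z_k)$, establishing almost-sure convergence of $\sum_k Z_k$ via Kolmogorov's maximal inequality (the one-series theorem), and finishing with Kronecker's lemma --- is precisely the standard textbook proof of this result, and each step is stated accurately, including the correct order of limits ($n \to \infty$ then $m \to \infty$) in the Cauchy argument and the correct form of Kronecker's lemma. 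There is nothing to fault; you have supplied the classical proof that the paper omits.
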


Equipped with this, we may formally state the convergence for the tree value estimates of \algNameFull{}. Let $V_k$ denote the tree value estimate, as computed in line \ref{alg:value_function_est} of Algorithm \ref{alg:ramcp-computeQ} at iteration $k$ of the algorithm.

In each iteration of the tree search, we sample a trajectory $h$ of horizon $H$ from the generative process $\theta \sim p(\theta), h \sim p(h\mid\theta)$. Note that $p(h\mid\theta)$ also depends on the action selection policy, but since in this work we sample all possible action sequences at every iteration, we do not explicitly write this dependence. In the algorithm, we sample trajectories by sampling the model where $p(\theta) = q(\theta)$, where $q(\theta)$ is the uniform distribution over the $M$ models. At iteration $k$ of GWFP, the agent must provide a near-optimal strategy for the generative process where $p(\theta) = b_k(\theta) = \frac{1}{k} \sum_{j=1}^k \badvj(\theta)$. Let $\Vstark$ denote the value function under this generative process.

Note that each of these processes define a marginal distribution over trajectories, which we denote as $q(h) = \sum_{i=1}^M q(\theta_i) p(h \mid \theta_i)$ and $\bar{p}_k(h) = \sum_{i=1}^M b_k(\theta_i) p(h \mid \theta_i)$.

In order to show that the estimates $V_k$ computed by \algNameFull{} converge to $\Vstark$, we also make use of the following assumption:

\begin{assumption}\label{as:bounded_v}
We assume that the value functions $V_k, \Vstark$ are all bounded by some positive constant $V_{\max} < \infty$ with probability 1.
\end{assumption}

Due to boundedness of rewards, the second assumption is a result of either operating within finite horizon problems, using a discount factor, or having a nonzero probability of transitioning to an absorbing terminal state at each state.

\begin{lemma}[Convergence of Tree Value Estimates]
\label{lem:q}
As $k\rightarrow\infty$, the value estimate $V_k$ under \algNameFull{} converges to $\Vstark$ almost surely.
\end{lemma}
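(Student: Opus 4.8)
The plan is to reduce the statement to two facts: that the empirical transition probabilities stored in the tree, $\hat p(s'\mid h,a)=W(has')/W(ha)$, track the true Bayes-adaptive transition probabilities of the BAMDP induced by the running-average belief $b_k$; and that the estimate $V_k$ is a Lipschitz function of these probabilities, so that convergence of the probabilities propagates to the values by backward induction over the finite tree. The continuity step is routine, so I would invest the effort in the first fact.

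First I would make the weight bookkeeping explicit. For a fixed reachable node $ha$ and successor $s'$, the increment contributed at iteration $j$ by model $\theta_i$ is $w_{i,j}\,\ind_{j,i}(has')$, where $w_{i,j}=M\,\badvj(i)$ and $\ind_{j,i}(\cdot)$ indicates that the trajectory simulated for model $\theta_i$ at iteration $j$ passes through the given node. Since all actions are expanded deterministically, reaching $ha$ is equivalent to reaching $h$, and conditioned on the adversary's belief sequence $\{\badvj\}$ the indicators are independent across $(i,j)$ with mean $p(has'\mid\theta_i)=p(h\mid\theta_i)\,T_{\theta_i}(s'\mid s,a)$. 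The per-iteration increments are bounded (each $\badvj$ lies in $\mathcal{B}\subset\mathcal{C}$, so $\sum_i\badvj(i)=1$ and $w_{i,j}\le M$), so $\sum_j \tfrac{1}{j^2}\mathrm{Var}(\cdot)<\infty$ and the SLLN for independent, non-identically-distributed variables applies. Using $\tfrac1k\sum_{j=1}^k\badvj(i)=b_k(i)$, this gives
\begin{align*}
  \tfrac1k W(has') &- M\textstyle\sum_i b_k(i)\,p(h\mid\theta_i)\,T_{\theta_i}(s'\mid s,a)\ \to\ 0,\\
  \tfrac1k W(ha) &- M\textstyle\sum_i b_k(i)\,p(h\mid\theta_i)\ \to\ 0
\end{align*}
almost surely. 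Taking the ratio (the $M$ cancels, and the denominator stays bounded away from $0$ for nodes reachable under $b_k$, which are precisely the nodes that persist in the tree), $\hat p(s'\mid h,a)$ converges almost surely to $\sum_i b_k(i)p(h\mid\theta_i)T_{\theta_i}(s'\mid s,a)/\sum_i b_k(i)p(h\mid\theta_i)$, which is exactly the posterior-weighted transition probability of the BAMDP with prior $b_k$ evaluated at history $h$; denote this limit $p_{b_k}(s'\mid h,a)$. This is the transition model underlying $\Vstark$.

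Given this, I would finish by induction on tree depth, backward from the horizon. At leaves ($\textsc{Len}(h)\ge H$ or terminal), $V_k=\Vstark=0$. The \textsc{ComputeQValues} recursion $Q(h,a)=\sum_{s'}\hat p(s'\mid h,a)(R+V_k(has'))$ and $V_k(h)=\max_a Q(h,a)$ is exactly the Bellman backup for $\Vstark$ but with $\hat p$ in place of $p_{b_k}$. Using boundedness of rewards and of the value functions (Assumption \ref{as:bounded_v}), $|Q(h,a)-Q^*_{b_k}(h,a)|\le V_{\max}\sum_{s'}|\hat p(s'\mid h,a)-p_{b_k}(s'\mid h,a)| + \max_{s'}|V_k(has')-\Vstark(has')|$; both terms vanish almost surely by the previous paragraph and the inductive hypothesis, and the $1$-Lipschitz property of $\max_a$ in the sup norm carries the bound to $|V_k(h)-\Vstark(h)|$. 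A union bound over the finitely many nodes of the reachable tree preserves almost-sure convergence, yielding $V_k\to\Vstark$ a.s.

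The hard part will be the first step, precisely because the target $b_k$ (and hence $\Vstark$) moves with $k$: I am not proving convergence to a fixed constant but that a normalized weighted sum stays asymptotically close to its own evolving conditional mean. The cleanest route is to condition on the filtration generated by the adversary's beliefs so the increments become genuinely independent and the stated non-iid SLLN applies verbatim; one must also verify that $\tfrac1k W(ha)$ is eventually bounded below for every persisting node, so the ratio map is continuous at the limit. The dependence of $\badvj$ on the past (it is the LP solution built from earlier value estimates) is what forces this conditioning; a fully rigorous treatment could instead phrase the increments as a martingale-difference sequence.
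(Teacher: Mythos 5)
Your proposal follows essentially the same route as the paper's proof: apply the SLLN for independent, non-identically-distributed variables to the normalized weights $\tfrac{1}{k}W(\cdot)$ to show the empirical transition probabilities $\hat{p}_k(s'\mid h,a)=W(has')/W(ha)$ track $\bar{p}_k(s'\mid h,a)$, the BAMDP transition probabilities under the running-average belief $b_k$, and then propagate this to the value estimates via a Lipschitz/backward-induction argument over the finite-depth tree (your additive induction and the paper's composed Lipschitz constant $(|\mathcal{S}|V_{\max})^D$ are the same estimate packaged differently). Your closing caution---that the weights $\badvj$ are functions of past value estimates, so the increments are only conditionally independent and one should condition on the adversary's filtration or use a martingale-difference formulation---is a refinement of a step the paper passes over by simply asserting that ``the simulation at each iteration is independent'' before invoking the SLLN.
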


\begin{proof}
To begin, we analyze the convergence of the statistics $W(h)$ estimated by the tree. From line \ref{alg:Wcomputation} in Algorithm \ref{alg:ramcp-simulate}, we can see that after sampling $k$ trajectories in the tree search, $W(h)$ is simply the sum of the weight $w_j$ over all simulations that passed through trajectory $h$.
\begin{align*}
   W(h) = \sum_{j=1}^k w_j \mathds{1}\{h_j = h\}
\end{align*}
Taking the expectation of this quantity over $q$, the generative process used in \algNameFull{}, and plugging the expression used to compute the weight $w_j$, we obtain:
\begin{align*}
    \E_q \left[ \frac{1}{k} W(h) \right] &= \sum_{j=1}^k \E_q \left[ w_j \mathds{1}_{\{h_j = h\}} \right] \\
    &= \frac{1}{k} \sum_{j=1}^k \E_{\theta \sim q(\theta)} \left[ \E \left[ w_j \mathds{1}_{\{h_j = h\}} \mid \theta \right] \right] \\
    &= \frac{1}{k} \sum_{j=1}^k \sum_{i=1}^M q(\theta_i) \E \left[ w_j \mathds{1}_{\{h_j = h\}} \mid \theta = \theta_i \right]  \\
    &= \frac{1}{k} \sum_{j=1}^k \sum_{i=1}^M q(\theta_i) \E \left[ \frac{\badvj(\theta_j)}{q(\theta_j)} \mathds{1}_{\{h_j = h\}} \mid \theta_j = \theta_i \right]  \\
    &= \frac{1}{k} \sum_{j=1}^k \sum_{i=1}^M  \badvj(\theta_i) p(h \mid \theta_i) \\
    &= \sum_{i=1}^M p(h \mid \theta_i) \left( \frac{1}{k} \sum_{j=1}^k  \badvj(\theta_i) \right) \\
    &= \sum_{i=1}^M p(h \mid \theta_i) b_k(\theta_i)\\
    &= \bar{p}_k(h).
\end{align*}
The simulation at each iteration is independent, so invoking the SLLN, we see that
\begin{align*}
    &\mathbb{P} \left( \lim_{k \rightarrow\infty} \frac{1}{k} \sum_{j=1}^k \left( w_j \mathds{1}_{\{h_j = h\}} - \E_q[w_j \mathds{1}_{\{h_j = h\}}] \right) = 0 \right) = 1 \\
&\iff \mathbb{P} \left( \lim_{k \rightarrow\infty} \frac{1}{k} W(h) - \bar{p}(h) = 0 \right) = 1.
\end{align*}

When computing the value function estimates, we take expectations w.r.t. the empirical transition probability computed as:
$$\hat{p}_k(s'|h,a) = \frac{ W(has') }{ W(ha) } = \frac{\frac{1}{k} W(has')}{\frac{1}{k} W(ha)}.$$
By the result above, we know that the numerator converges almost surely to $\bar{p}_k(has')$, and denominator to $\bar{p}_k(ha)$, and thus by continuity,
\begin{align}
    \mathbb{P} \left( \lim_{k \rightarrow\infty} \hat{p}_k(s'\mid h,a) - \bar{p}_k(s' \mid h,a) = 0 \right) = 1. \label{eq:SLLN_P}
\end{align}

This gives $\mathbb{P}(\lim_{k \rightarrow\infty} ||\hat{p}_k - \bar{p}_k||_\infty = 0) = 1$. To complete the proof of almost sure convergence, we show that the value function and $Q$ functions as defined previously are Lipschitz functions of the underlying transition probability over histories $P$, for a fixed reward function $R$. The max function in the computation of $V_k$ is $1$-Lipschitz in $Q$, and the expectation in the true $Q$ function is $|\mathcal{S}| V_{\max}$-Lipschitz in $P$ due to Assumption~\ref{as:bounded_v}. The Lipschitz constants here are with respect to the $\ell_\infty$ norm of the arguments. Thus every component of $V$ and $Q$ is a compositions of Lipschitz functions. Furthermore, it is easy to see that the Lipschitz constant from one level of the tree to the one above it is at most $|\mathcal{S}|V_{\max}$, making the Lipschitz constant at the root\footnote{And thus all the Lipschitz constants are uniformly bounded by $(|\mathcal{S}| V_{\max})^D$.} $(|\mathcal{S}|V_{\max})^D$. To summarize, we have
\begin{align}\label{eq:lipschitz}
||Q(\cdot, \cdot;P,R) - Q(\cdot, \cdot;P',R)||_\infty \leq (|\mathcal{S}|V_{\max})^D ||P - P'||_\infty
\end{align}
To finish the proof of almost sure convergence, we use the Lipschitz result~\eqref{eq:lipschitz} with the SLLN result~\eqref{eq:SLLN_P}.
\begin{align*}
||V_k -\Vstark||_\infty &\leq (|\mathcal{S}|V_{\max})^D ||\hat{p}_k - \bar{p}_k||_\infty.
\end{align*}
The right-hand side converges to zero almost surely, and the left-hand side is non-negative so it must also converge to zero almost surely.
\end{proof}

\subsection{Proof of Theorem 2}

Now, based on the above, we may prove Theorem \ref{thm:conv}.

\begin{proof}[Proof of Theorem \ref{thm:conv}] We will begin by rewriting Equation \ref{eqn:gwfp} in the specific notation of Algorithm \ref{alg:ramcp}. Let $b_k$ denote the average belief over model parameters after $k$ iterations.  Therefore, Equation \ref{eqn:gwfp} is equivalent to
\begin{equation}
    b_{k+1} = (1 - \alpha_{k+1}) b_k + \alpha_{k+1} BR_\epsilon(\pi_k)
\end{equation}
\begin{equation}
    \pi_{k+1} = (1 - \alpha_{k+1}) \pi_k + \alpha_{k+1} BR_\epsilon(b_k),
\end{equation}
where we set $M_k =0$ for all $k$. We will show that the belief and policy updates that are being performed in Algorithm \ref{alg:ramcp} satisfy the above. We will first look at the policy update. By Lemma \ref{lem:q}, the approximate values computed from the tree converge in probability to the optimal function for the averaged set of beliefs. This implies the policy is an $\epsilon$-best response, with $\epsilon \to 0$ as $k \to \infty$. Similarly, for the belief update, the value estimate converges in probability to the optimal value for each model. Thus the computed solution to Equation \ref{eqn:opt_metric} converges to the best response as $k\to \infty$. Therefore both the belief update and the policy update satisfy the definition of a GWFP.

By Lemma \ref{lem:conv}, any GWFP process will converge to the set of Nash Equilibria in zero-sum, two-player games. Note that since the infinite action set for the adversary, $\mathcal{B}$, is convex, any Nash equilibrium will correspond to a solution of Equation \ref{eqn:opt_objective} by the Minimax theorem. This implies that as $k \to \infty$, the computed belief and policy converge to the optimal Nash equilibrium, meaning $\lim_{k \to \infty} \pi_k \in \Pi^*$ in probability.
\end{proof}

\section{Weighted Tree Updates}
\label{app:weighted_tree}

When we sample $\theta$ at the root of the tree, and then follow any policy up to history $h$, the distribution of samples of $\theta$ at a node $h$ will be distributed according to $p(\theta_i | h)$. We refer to Lemma 1 in \cite{guez2013scalable} for a proof. We require that $\theta_i$ be sampled uniformly to get consistent estimates of $V_{\theta_i}(\pi)$ for every $\theta$. However, we would like to update the tree's estimates to reflect values corresponding to simulation with $\theta_i \sim b_j(\theta)$, the adversarially chosen distribution.

Let $Q_q(h,a)$ represent a sample of $Q(h,a)$ obtained at node $(h,a)$ where $\theta$ was sampled from from $q(\theta)$ at the root. The standard Monte Carlo updates for the estimator are:
\begin{align}
\begin{split} \label{eqn:normal_updates}
    \hat{N}^{(k+1)}_q(h,a) &= \hat{N}^{(k)}_q(h,a) + 1 \\
    \hat{Q}^{(k+1)}_q(h,a) &= \hat{Q}^{(k)}_q(h,a) + \frac{Q^k_q(h,a) - \hat{Q}^{(k)}_q(h,a)}{\hat{N}^{(k)}_q(h,a)}
\end{split}
\end{align}
with
\begin{align}
        \hat{N}^{(0)}_q(h,a) &= 0\\
        \hat{Q}^{(0)}_q(h,a) &= 0.
\end{align}
Define the weighted estimators $\hat{Q}^{(k)}_{q,w}(h,a)$ with the update equation as
\begin{align}
\begin{split} \label{eqn:weighted_updates}
\hat{N}^{(k+1)}_{q,w}(h,a) &= \hat{N}^{(k)}_{q,w}(h,a) + 1 \\
\hat{Q}^{(k+1)}_{q,w}(h,a) &= \hat{Q}^{(k)}_{q,w}(h,a) + \frac{ w(\theta_k) Q^k_q(h,a) - \hat{Q}^{(k)}_q(h,a)}{\hat{N}^{(k)}_q(h,a)}
\end{split}
\end{align}
with
\begin{align}
    \hat{N}^{(0)}_{q,w}(h,a) &= 0 \\
    \hat{Q}^{(0)}_{q,w}(h,a) &= 0.
\end{align}
With the correct choice of weighting, we can sample $\theta$ from one distribution $q$, while obtaining estimates of the Q values as if $\theta$ was sampled from a different distribution $p$.

\begin{theorem}[Consistency of Weighted Tree Updates]
If $w = p/q$, then $ \lim_{k\rightarrow\infty} \hat{Q}^{(k)}_{p}(h,a) = \lim_{k\rightarrow\infty} \hat{Q}^{(k)}_{q,w}(h,a)$, i.e. the estimators are consistent.
\end{theorem}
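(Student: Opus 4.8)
The plan is to recognize this as a standard importance-sampling consistency statement and to prove it by applying the strong law of large numbers (SLLN) to the numerators and denominators of the two running averages separately. First I would make explicit that each estimator is a weighted sample mean taken over the iterations in which the node $(h,a)$ is actually reached. Writing $\ind_j$ for the indicator that iteration $j$'s simulated trajectory passes through $(h,a)$, $\theta_j$ for the model sampled at the root on iteration $j$, and $Q_j(h,a)$ for the return recorded there, the standard estimator is the ratio
\begin{equation*}
  \hat{Q}^{(k)}_{p}(h,a) = \frac{\sum_{j=1}^k \ind_j\, Q_j(h,a)}{\sum_{j=1}^k \ind_j}, \qquad \theta_j \sim p,
\end{equation*}
while the weighted estimator draws $\theta_j \sim q$ and reweights both the recorded value and the (weighted) visitation count by $w(\theta_j)$, giving
\begin{equation*}
  \hat{Q}^{(k)}_{q,w}(h,a) = \frac{\sum_{j=1}^k w(\theta_j)\, \ind_j\, Q_j(h,a)}{\sum_{j=1}^k w(\theta_j)\, \ind_j}.
\end{equation*}
That the denominator here is the \emph{weighted} count $W(h,a)$ maintained by the algorithm, rather than the raw visit count, is exactly the point at which the reweighting must be applied to the normalization as well as to the value; I would flag at the outset that it is this weighted normalization that makes the result hold.

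Next I would take limits term by term. Dividing numerator and denominator by $k$ and invoking the SLLN (the independent, not-identically-distributed version stated above suffices, since the recorded returns depend on the evolving greedy tree policy yet remain bounded), the standard estimator converges almost surely to $\E_p[\ind\, Q(h,a)]/\E_p[\ind]$, where the expectations are over the full generative process, i.e. $\theta\sim p$ at the root together with the stochastic transitions; the visitation indicator $\ind$ cleanly absorbs the fact that the node is reached only on a random subset of iterations and that, conditioned on reaching it, $\theta$ follows the posterior rather than the prior. An analogous application yields that the weighted estimator converges almost surely to $\E_q[w\,\ind\,Q(h,a)]/\E_q[w\,\ind]$. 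Boundedness of $Q$ (Assumption~\ref{as:bounded_v}) and of $w$ --- finite because $q$ is uniform over the $M$ models and hence has full support, so $w=p/q$ is well defined --- supplies the finite-variance hypotheses of the SLLN.

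The heart of the argument is then a single importance-sampling identity. Because the distribution of the simulated trajectory conditioned on the root model $\theta$ is governed only by the MDP dynamics $T_\theta$, and is therefore identical under the two sampling schemes, for any bounded trajectory functional $g$ we have
\begin{equation*}
  \E_q\!\left[ w(\theta)\, g \right] = \sum_{i=1}^M q(\theta_i)\frac{p(\theta_i)}{q(\theta_i)}\, \E[g \mid \theta_i] = \sum_{i=1}^M p(\theta_i)\,\E[g\mid\theta_i] = \E_p[g].
\end{equation*}
Applying this with $g=\ind\,Q(h,a)$ to the numerator and with $g=\ind$ to the denominator shows that the limit of the weighted estimator is exactly $\E_p[\ind\,Q(h,a)]/\E_p[\ind]$, which coincides with the limit of the standard estimator, so the two estimators are almost surely equal in the limit.

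I expect the main obstacle to be conceptual rather than computational: pinning down that the per-iteration samples $\ind_j\,Q_j(h,a)$ share the same conditional law given the root model $\theta$ under both processes --- so that reweighting only at the root correctly corrects the entire downstream trajectory distribution, including whether the node is visited at all --- and being careful that the normalization is by the weighted count, so that the importance weights cancel consistently between numerator and denominator. The SLLN bookkeeping and the absolute-continuity requirement that $q$ dominate $p$ become routine once this identity is in place.
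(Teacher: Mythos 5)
Your proof is correct, and it takes a genuinely different --- and in one respect sounder --- route than the paper's. The paper defines the weighted estimator as a plain average of the reweighted samples normalized by the \emph{raw} visit count $N(h,a)$, and evaluates its limit by conditioning on the event that $(h,a)$ is reached: the models seen at $h$ follow a posterior, which is expanded by Bayes' rule so that the weighted and unweighted limits match term by term after substituting $w = p/q$. Your proof instead works with unconditional expectations, absorbing visitation into an indicator $\ind_j$, writes the weighted estimator in self-normalized importance-sampling form, and sends numerator and denominator to their limits separately via the SLLN and the identity $\E_q[w\,g] = \E_p[g]$. The difference matters: under $q$-sampling the node is reached with probability $q(h) = \sum_i p(h\mid\theta_i)\,q(\theta_i)$, not $p(h)$, so a raw-count-normalized weighted average converges to $\tfrac{p(h)}{q(h)}\,\E_p[Q \mid h \text{ reached}]$, which is off by a factor of $p(h)/q(h)$ at every node where the two marginals differ; the paper's derivation masks this by writing $p(h)$ in the denominator of the posterior for the $q$-sampled process where $q(h)$ belongs. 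Normalizing by the weighted count $W(h,a)$, as you insist on and as the algorithm actually does (\textsc{ComputeQValues} divides by $W(h,a)$, and the proof of Lemma \ref{lem:q} uses $\hat p_k(s'\mid h,a) = W(has')/W(ha)$), is precisely what cancels this factor, so your formulation is the one for which the claimed consistency genuinely holds. Both arguments hinge on the same key probabilistic fact --- conditioned on the root model $\theta$, the trajectory law is identical under the two sampling schemes, so reweighting at the root corrects the entire downstream distribution --- and both share the same unaddressed wrinkle, namely that the recorded returns depend on the evolving greedy tree policy, so the per-iteration samples are not literally independent as the SLLN requires; that gap is inherited from the paper rather than introduced by you.
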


\begin{proof}
Note that the normal recurrence relation (\ref{eqn:normal_updates}) corresponds to the explicit formula
\[
\hat{Q}^{(k)}_p(h,a) = \frac{1}{N^{(k)}_p(h,a)} \sum_{i=1}^{N^{(k)}_p(h,a)} Q^i_p(h,a).
\]
This is a Monte Carlo estimate of $Q^i_p(h,a)$, and thus will converge to
\begin{align}
\small
    \E [ Q^i_p(h,a) ] &=  \sum_{i=1}^M \E [ Q^i_p(h,a) | \theta_i ]  p(\theta_i | h) \label{eqn:step1} \\
    &=  \sum_{i=1}^M \frac{ \E [ Q^i_p(h,a) | \theta_i ]  p(h | \theta_i) p(\theta_i) }{p(h)}, \label{eqn:step2}
\end{align}
where Equation \ref{eqn:step1} follows from the definition of the expectation, and Equation \ref{eqn:step2} was obtained by application of Bayes' rule.

Similarly, the weighted recurrence corresponds to the explicit formula
\[
\hat{Q}^{(k)}_{q,w}(h,a) = \frac{1}{N^{(k)}_{q,w}(h,a)} \sum_{i=1}^{N^{(k)}_{q,w}(h,a)} w(\theta_i) Q^i_q(h,a).
\]
This is a Monte Carlo estimate of $w(\theta_i) Q^i_q(h,a)$, which converges to the expected value of $w(\theta_i) Q^i_q(h,a)$:
\begin{align}
\small
    \E [ &w(\theta_i) Q^i_q(h,a) ] = \sum_{i=1}^M \E [ w(\theta_i) Q^i_q(h,a) | \theta_i ]  p(\theta_i | h)  \label{eqn:weighted_step1}\\
    &= \sum_{i=1}^M \frac{ \E [ Q^i_q(h,a) | \theta_i ] w(\theta_i)  p(h | \theta_i) q(\theta_i) }{p(h)} \label{eqn:weighted_step2} \\
    &= \sum_{i=1}^M \frac{ \E [ Q^i_q(h,a) | \theta_i ]  p(h | \theta_i) p(\theta_i) }{p(h)} \label{eqn:weighted_step3} \\
    &= \sum_{i=1}^M \frac{ \E [ Q^i_p(h,a) | \theta_i ]  p(h | \theta_i) p(\theta_i) }{p(h)} \label{eqn:weighted_step4} \\
    &= \E [ Q^i_p(h,a) ].
\end{align}
From Equation \ref{eqn:weighted_step1} to Equation \ref{eqn:weighted_step2}, we use the fact that $w(\theta_i)$ is constant within the conditional expectation. Equation \ref{eqn:weighted_step3} is obtained by substituting $w = q/p$. Equation \ref{eqn:weighted_step4} follows because conditioned on $\theta_i$, the estimates of $Q^i_p$ and $Q^i_q$ should have the same distribution, since the only difference between them is in the sampling of $\theta$. Since the two update formulas are Monte Carlo estimates of quantities with the same expectation, their value as $N(h,a) \rightarrow \infty$ will be the same. Since the number of times a node is simulated goes to infinity with the number of iterations, these estimators must converge to the same value in the limit as $k \rightarrow \infty$ as well.
\end{proof}

\section{Experimental Details}
\label{sec:bandit}

\subsection{Bandit Reward Model}

\begin{table}[h]
\footnotesize
  \caption{\label{tab:mab} Bandit model for the $n$-pull bandit experiment. Each cell lists the probability of obtaining the reward $R_i$ under one of the two models, where the reward $R_i$ is listed in the column heading. Each column corresponds to a certain reward, each row corresponds to a certain action/model combination. }
  \centering
  \begin{tabular}{ | c | c | c | c | c | c | c |}
    \hline
    $R$ & $-1.0$ & $-0.5$ & $-0.1$ & $ 0.0$ & $0.5$ & $1.0$ \\ \hline \hline

    $P(R | a_1,\theta_1)$ & 0.0 & 0.0 & 1.0 & 0.0 & 0.0 & 0.0 \\ \hline
    $P(R | a_2,\theta_1)$ & 0.0 & 0.0 & 0.0 & 0.0 & 1.0 & 0.0\\ \hline
    $P(R | a_3,\theta_1)$ & 0.2 & 0.0 & 0.0 & 0.0 & 0.0 & 0.8\\ \hline
    $P(R | a_4,\theta_1)$ & 0.8 & 0.0 & 0.0 & 0.0 & 0.0 & 0.2\\ \hline \hline

    $P(R | a_1,\theta_2)$ & 0.0 & 0.0 & 0.0 & 1.0 & 0.0 & 0.0\\ \hline
    $P(R | a_2,\theta_2)$ & 0.0 & 1.0 & 0.0 & 0.0 & 0.0 & 0.0\\ \hline
    $P(R | a_3,\theta_2)$ & 0.8 & 0.0 & 0.0 & 0.0 & 0.0 & 0.2\\ \hline
    $P(R | a_4,\theta_2)$ & 0.2 & 0.0 & 0.0 & 0.0 & 0.0 & 0.8\\ \hline
  \end{tabular}
\end{table}

There are several features to note about this bandit problem. First, the rewards for $a_1$ and $a_2$ are not stochastic within each model. The important result of this is that taking one of these actions completely disambiguates between the models. For example, if an agent takes action $a_1$ and receives a reward of $-0.1$, the posterior probability of $\theta_1$ is $1$, and the probability of $\theta_2$ is $0$. This is important, as on an $n$-pull bandit problem, an agent action optimally in a Bayes-adaptive sense will aim to trade off exploration and exploitation. The difference between the two actions is the magnitudes of the reward for each model. If an agent is acting in a risk-sensitive fashion with respect to models, given for example a prior belief over models of $0.5$ for each model, the agent may prefer $a_1$, which has a lower expected reward but also a lower variance. In the risk-neutral case, a Bayes-adaptive agent will prefer $a_2$, which has higher expected reward but also higher variance, and lower worst-model performance.

For actions $a_3$ and $a_4$, the reward distributions will have the same mean, variance, and worst-model performance for a uniform prior over models. However, given better knowledge of the model, the expected value of one of the two actions increases. Therefore, in the case where the agent takes action $a_1$ or $a_2$ to disambiguate between the models (thus setting the posterior probability of one of the models to $1$), the expected reward under either $a_3$ or $a_4$ will be 0.6. Therefore, in this environment, a Bayes-adaptive optimal agent will disambiguate between models by taking action $a_1$ or $a_2$, and then exploit by taking either $a_3$ or $a_4$. The specific actions and the associated value of the optimal Bayes-adaptive policy depend on the risk metric chosen.

\subsection{Patient Treatment Model}
The transition models used in the patient treatment experiment were generated as follows. For each action, we allowed relative changes in state $s' - s = \Delta s \in \{-3,-2,-1,0,1,2,3\}$. A $3 \times 7$ matrix representing the likelihood of these 7 possible outcomes conditioned on the three possible actions was created randomly, once for each of the 15 possible underlying models $\theta$. For each $\theta$, the transition matrix was created by sampling 3 rows of a $7\times7$ identity matrix, adding normally distributed zero-mean noise with standard deviation $0.1$, and then normalizing the rows to sum to one. The specific probabilities used in the experiments presented are presented below.

The reward function was defined as $R(s,a,s') = s'/20 - 2\cdot\mathds{1}_{s'=0}$, giving increasing rewards as health increases, with a -2 penalty for death ($s=0$).

\section{Further Experimental Results}
\label{sec:further}

\subsection{Multi-armed Bandit}

Figure \ref{fig:bandit_stats} shows 95\% confidence intervals for \algName{} for varying CVaR risk levels. Note that as expected, lower values of $\alpha$ result in policies with substantially lower variance over realized cost. However, this comes at the cost of a slight degredation in mean accrued reward. The non Bayes-adaptive policy (which is Markovian) is also plotted. Because it does not incorporate the information gained from previous transitions, it does not actively disambiguate between models and thus results in low mean reward.

\onecolumn
\begin{figure}[t!]
\centering
    \includegraphics[width=0.66\linewidth]{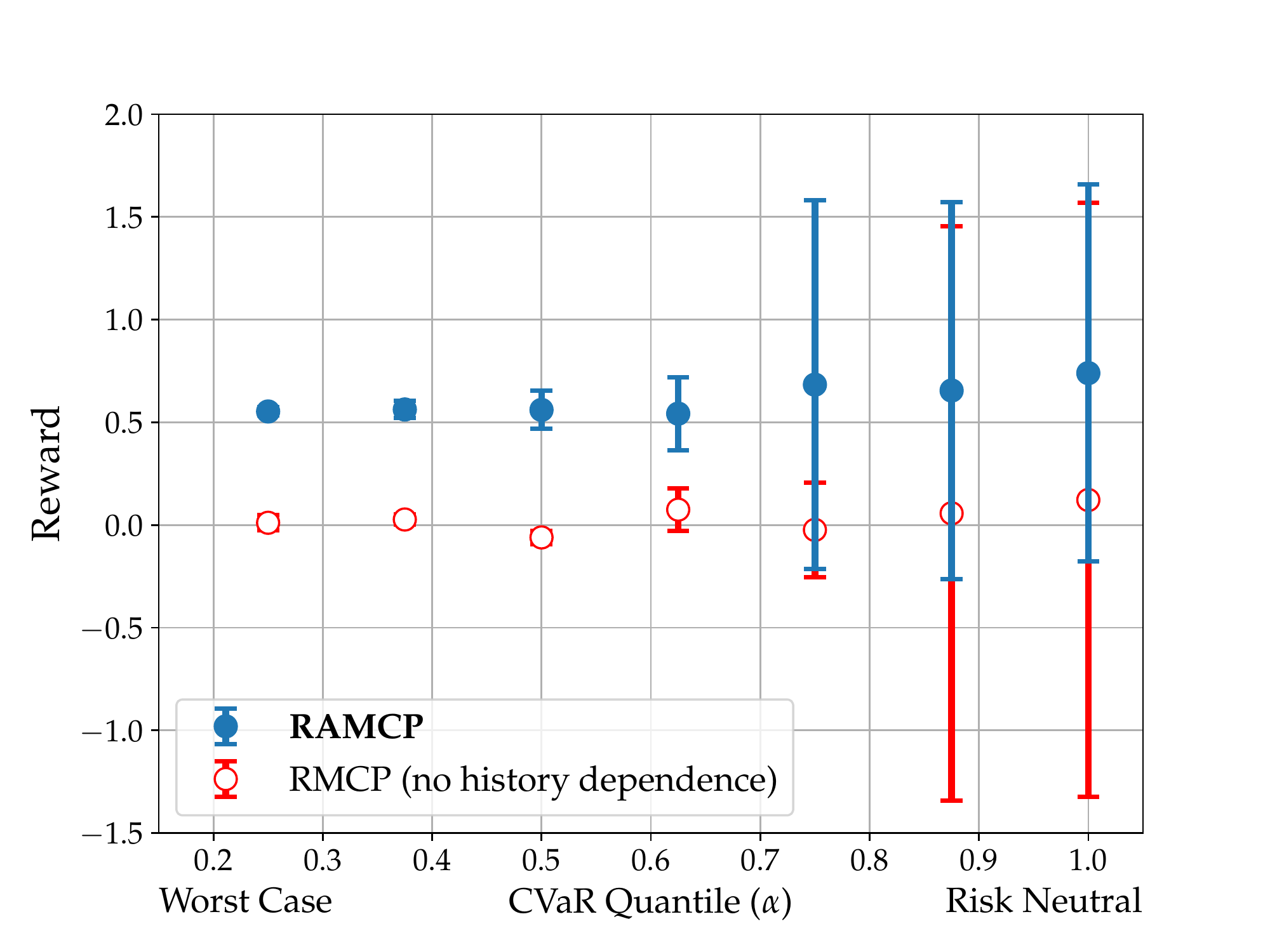}
    \caption{Performance of \algName{} on a bandit problem for varying CVaR quantiles ($\alpha$). For each $\alpha$, 500 trials were performed. In this plot, the error bars denote the 95\% confidence bound corresponding to variation in performance due to uncertainty over the underlying model. Applying this same risk-sensitive optimization method on a state dependent (RMCP) policy which performs consistently worse than the history dependent RAMCP policy, highlighting the importance of planning with adaptation in mind.} \label{fig:bandit_stats}
\end{figure}

\end{document}